\documentclass[twoside]{article}

%
\usepackage[accepted]{aistats2023}
%


\usepackage[round]{natbib}

\usepackage{multibib}
\newcites{latex}{References}

\usepackage{amsthm}
\newtheorem{theorem}{Theorem}
\newtheorem{corollary}{Corollary}
\newtheorem{lemma}{Lemma}
\newtheorem{definition}{Definition}
\newtheorem{remark}{Remark}
\usepackage{amssymb}


\usepackage{xcolor}     
\usepackage{algorithm}
\usepackage{algpseudocode}
\usepackage{graphicx}
\usepackage[hidelinks]{hyperref}
\usepackage{float}

\usepackage[capitalize,noabbrev]{cleveref}

\begin{document}

%

%

\twocolumn[

\aistatstitle{Randomized Greedy Learning for Non-monotone Stochastic Submodular Maximization Under Full-bandit Feedback}

\aistatsauthor{ Fares Fourati \And Vaneet Aggarwal \And  Christopher John Quinn \And Mohamed-Slim Alouini}

\aistatsaddress{ KAUST \\ fares.fourat@kaust.edu.sa \And Purdue University \& KAUST \\ vaneet.aggarwal@kaust.edu.sa \And  Iowa State University \\ cjquinn@iastate.edu \And KAUST \\ slim.alouini@kaust.edu.sa} 
]

\begin{abstract}
We investigate the problem of unconstrained combinatorial multi-armed bandits with full-bandit feedback and stochastic rewards for submodular maximization. Previous works investigate the same problem assuming a submodular and monotone reward function. In this work, we study a more general problem, i.e., when the reward function is not necessarily monotone, and the submodularity is assumed only in expectation. We propose Randomized Greedy Learning (RGL) algorithm and theoretically prove that it achieves a $\frac{1}{2}$-regret upper bound of $\Tilde{\mathcal{O}}(n T^{\frac{2}{3}})$ for horizon $T$ and number of arms $n$. We also show in experiments that RGL empirically outperforms other full-bandit variants in submodular and non-submodular settings.

\end{abstract}

\section{Introduction}
\label{into}

The stochastic multi-armed bandits, first introduced by \cite{robbins1952some}, formalizes several challenging decision-making problems, such as clinical decisions, investment, pricing, influence maximization, and product recommendation. The goal of a decision-maker 
can be modeled as maximizing a particular reward function that depends on her decisions throughout time. The decision maker needs to tradeoff between exploration (exploring sub-optimal arms) and exploitation (playing the chosen arm), and efficient guarantees for regret have been widely studied \citep{thompson1933likelihood, auer2002using, auer2002finite,  auer2010ucb, agrawal2012analysis, pmlr-v32-gopalan14}. 

One natural extension for the multi-armed bandit problem is the combinatorial multi-armed bandit problem. At each round, instead of selecting just one base arm, the agent selects a set of arms and receives a joint reward for that set. If the agent only receives the reward for a chosen set of arms, then it is called \textit{full-bandit} feedback. Otherwise, if the agent receives further information about his choice, such as the reward of each individual arm of that set, then it is called \textit{semi-bandit feedback}. The former setting is more challenging, as the decision maker has far less information to decide than in the latter. The former setting is the focus of this paper. 

The study of combinatorial multi-armed bandits problems with submodular reward functions has recently attracted much attention \citep{nie2022explore, 49310}. Formally, a set function $f: 2^{\Omega} \rightarrow \mathbb{R}$ defined on a finite ground set $\Omega$ is said to be submodular if it satisfies the diminishing return property: for all $A \subseteq B \subseteq \Omega$, and $x \in \Omega \backslash B$, it holds that $f(A \cup\{x\})-f(A) \geq f(B \cup\{x\})-f(B)$. The submodularity assumption is motivated by several real-world scenarios. For example, opening more supermarkets in a certain area would result in diminishing returns due to demand saturation. Hence, the widespread use of submodular functions as utility functions in economics and algorithmic game theory. Furthermore, submodularity appears in many important settings in combinatorial optimization such as cuts in graphs \citep{goemans1995improved, iwata2001combinatorial}, rank functions of matroids \citep{edmonds2003submodular}, and set covering problems \citep{feige1998threshold}.

Multi-armed bandits have been studied in two different settings, \textit{adversarial setting} where an adversary generates a reward sequence potentially based on the agent's previous decisions \citep{auer2002nonstochastic}, and \textit{stochastic setting} where the reward of each action is drawn independently from a certain (unknown) distribution \citep{auer2002finite}. An adversarial setting is harder for standard multi-armed bandits, and its result can be directly used as one achievable strategy for the stochastic setting \citep{lattimore2020bandit}. However, the same is not true for the research on submodular bandits. In prior works in the area \citep{pmlr-v75-roughgarden18a, 49310}, the environment in adversarial bandits chooses a sequence of submodular functions $\left\{f_1, . . . , f_T \right\}$. In this work, we focus on stochastic reward functions. Thus, we assume a more relaxed property of submodularity which is submodularity in expectation (as defined in Definition \ref{def:submod}). That is, the realizations of the stochastic function $f_t$ in the problem we consider need not be submodular, making the adversarial algorithms no longer hold in this setting. 

\begin{definition}\label{def:submod}
 A stochastic set function $f: 2^{\Omega} \rightarrow \mathbb{R}$ defined on a finite ground set $\Omega$ is said to be submodular in expectation if it satisfies the diminishing return property in expectation: for all $A \subseteq B \subseteq \Omega$, and $x \in \Omega \backslash B$, we have,
 \begin{equation}
     \mathbb{E}f(A \cup\{x\})-\mathbb{E}f(A) \geq \mathbb{E}f(B \cup\{x\})-\mathbb{E}f(B).
 \end{equation}
\end{definition}

Several works in the literature assume submodular monotone functions, as it is simpler to manipulate and can have stronger guarantees \citep{nie2022explore,chen2020black}. A submodular set function $f: 2^{\Omega} \rightarrow \mathbb{R}$ is called monotone if for any $A \subseteq B \subseteq \Omega$ we have $f(A) \leq f(B)$. This work considers a more general problem where the functions are not necessarily monotone. 

There are several motivating use cases for the non-monotone submodular maximization, including optimizing feature selection \citep{das2008algorithms, khanna2017scalable, elenberg2018restricted}, and data summarization \citep{mirzasoleiman2016fast}. Optimizing feature selection can be modeled as a non-monotone submodular maximization due to the possible overfitting to the training data \citep{fahrbach2018non}. Data summarization selects a representative subset of data points, and the typical utility functions are submodular while not monotone to penalize larger solutions \citep{tschiatschek2014learning, dasgupta-etal-2013-summarization}. For further motivating examples, see Appendix \ref{appendix_motivation}.

{\bf Contributions:} The key contributions in this paper are summarized as follows:

\noindent {\bf i.} We propose Randomized Greedy Learning (RGL), the first algorithm designed for stochastic combinatorial multi-armed bandits problems with a non-monotone stochastic submodular reward function and full-bandit feedback. It has low storage and computational complexity.

\noindent {\bf ii.} We prove that RGL achieves a $\frac{1}{2}$-regret upper bound gurantees of $\mathcal{O}(n T^{\frac{2}{3}} \log(T)^{\frac{1}{3}})$ for horizon $T$ and number of arms $n$.

\noindent {\bf iii.} We empirically show that RGL outperforms other full-bandit feedback variants regarding expected reward and cumulative regret.

{\bf Related Work:} Submodular maximization is NP-hard.  \cite{feige2011maximizing} showed that for any constant $\varepsilon > 0$, any algorithm achieving an approximation of $(\frac{1}{2} + \varepsilon)$ requires an exponential number of oracle queries to the non-monotone submodular function. Further, they proposed several greedy algorithms, such as deterministic adaptive and randomized adaptive, that are $\frac{1}{3}$ and $\frac{2}{5}$ -approximation algorithms, respectively. More recently \citep{buchbinder2015tight, buchbinder2018deterministic} proposed linear time $\frac{1}{2}$-approximation algorithms. Our work extends the greedy algorithms in \citep{buchbinder2015tight} from the non-stochastic offline setting to the stochastic online setting and proves the regret guarantees in the stochastic online setup. In practical scenarios, rewards are stochastic; thus, the agent has to optimize online exploration and exploitation under noisy rewards. The online setting requires an exploration-exploitation tradeoff for efficient regret guarantees, with samples from the stochastic function, making the problem more complex. 

Non-monotone submodular maximization has recently been studied  in the adversarial setting \citep{pmlr-v75-roughgarden18a}, where a greedy algorithm  under full-information is proposed which achieves a $\frac{1}{2}$-regret upper bound of $\Tilde{\mathcal{O}}(n T^{\frac{1}{2}})$. Apart from the differences in the stochastic and adversarial settings, our work is also different from a feedback perspective. While they study the problem under full-information, namely after playing an action $S_t$, they receive not only the reward $f_t(S_t)$ but the entire function $f_t(\cdot)$.  We study the problem under full-bandit feedback, i.e., the agent, in our case, has much less information to make decisions. Full-bandit feedback in the adversarial setting has also been recently studied \citep{49310} where the proposed algorithm achieves a $\frac{1}{2}$-regret upper bound of $\Tilde{\mathcal{O}}(n T^{\frac{2}{3}})$.

\vspace{-.1in}
\section{Problem Statement}
\label{problem}
\vspace{-.1in}

In this section, we formally define the problem studied in this paper. Let $\Omega$ be the set of all the base arms, $u_i$ an arm of index $i$, and $n = |\Omega|$ be the number of arms. We consider a sequential decision-making problem with a fixed horizon $T$, where at each time step $t$, the agent chooses a subset of arms (action), $S_t \subseteq \Omega$. At every step $t$, the agent receives a sample reward for selecting a subset using a stochastic function $f(S_t)$. 

We assume the reward function $f(\cdot)$ to be stochastic and submodular in expectation,  see Definition \ref{def:submod}, not necessarily monotone, and i.i.d. conditioned on a given subset. Without loss of generality, we assume $f(\cdot)$ to be bounded in $[0,1]$ \footnote{The results can be directly extended to  a general submodular in expectation function $f(\cdot)$ with a minimum value $f_{min}$ and a maximum values $f_{max}$ by considering a normalized submodular function $g(S) = (f(S) - f_{min})/(f_{max}- f_{min})$}. 
The agent's goal is to maximize the cumulative reward over time until the time horizon $T$.

One standard metric to measure the performance of an online learner over time is to compare its performance with an agent that has access to the optimal maximizer $OPT$ of the expectation of the reward function $f(\cdot)$,  

\vspace{-.2in}
\begin{equation}
    OPT = \text{arg} \max_{S \subseteq \Omega} \mathbb{E} f(S).
\end{equation}
Maximizing a general non-monotone submodular function is an NP-hard problem. \cite{feige2011maximizing} studied the hardness of non-monotone submodular maximization assuming the function $f(\cdot)$ is obtained through a value oracle. They proved that for any constant $\varepsilon > 0$, any algorithm achieving an approximation of $(\frac{1}{2} + \varepsilon)$ requires an exponential number of oracle queries. Subsequently, \cite{buchbinder2015tight} achieved the $\frac{1}{2}$-approximation in linear time in the offline non-stochastic setting. Therefore, we compare the agent's cumulative reward to $\frac{1}{2}T \mathbb{E} f(OPT)$, and we denote the cumulative $\frac{1}{2}$-regret $\mathcal{R}_{\frac{1}{2}}(T)$, where, 
\vspace{-.1in}
\begin{equation}
\begin{aligned}
\mathcal{R}_{\frac{1}{2}} &= \sum_{t=1}^{T} (\frac{1}{2} f(OPT) - f(S_t))
\end{aligned}
\end{equation}
Notice that the $\frac{1}{2}$-regret is random, and its randomness is due to the stochasticity of the reward function $f(\cdot)$ and the chosen actions (subsets) throughout time. Thus, we mainly focus on minimizing the expected $\frac{1}{2}$-regret of the agent, defined as follows, 
\begin{equation}
\begin{aligned}
\label{exp_regret_definition}
\mathbb{E}[\mathcal{R}_{\frac{1}{2}}] &= \frac{1}{2}T \mathbb{E} [f(OPT)] - \sum_{t=1}^{T} \mathbb{E}[f(S_t)],
\end{aligned}
\end{equation}
where the expectation is defined over the stochasticity of $f(\cdot)$ and the randomness of the chosen sequence of actions, for ease of notation, we write $\mathcal{R}(T)$ instead of $\mathcal{R}_{\frac{1}{2}}(T)$ for the remainder of the paper.

\section{Proposed RGL Algorithm}
\label{algo}

This section presents our proposed RGL algorithm, adapted from the offline algorithm proposed in \citep{buchbinder2015tight} for a non-stochastic $f(\cdot)$. The pseudocode for RGL can be found in Algorithm \ref{alg:RGL}. 

For the problem we consider (unconstrained action space), if the reward function is monotone, then the best set is simply the set of all base arms $\Omega$. A trivial algorithm (no exploration needed) can attain an approximation ratio of $\alpha=1$. However, when the reward function is non-monotone, adding arms is no longer necessarily a good choice. Consequently, tracking two sets $X_i$ (starting as $\emptyset$) and $Y_i$ (starting as $\Omega$) is a useful strategy. RGL goes over all the individual arms one by one and decides whether to add it to a set of base arms $X_i$ or remove it from the set of base arms $Y_i$. The decisions of adding or removing any arm are made in a randomized greedy fashion using empirical estimates of marginal gains until a decision is made for all the individual arms and then exploits the decided best set of arms. 

Let $X_i$ and $Y_i$ be two sets of arms. Initially, $X_0 = \emptyset$ and $Y_0 = \Omega$. The algorithm has $n$ phases, where $n$ is the number of arms, and each phase has $m$ sub-phases, where $m$ is the number of repetitions to estimate the quality of a given set of arms. In phase $i$ out of $n$, the agent estimates the expectation of the following two random variables, $a_i$ and $b_i$, defined as follows,
\begin{equation}
\begin{aligned}
    &a_i = f(X_{i-1} \cup \left\{u_i\right\}) - f(X_{i-1}) \\
    &b_i =  f(Y_{i-1} \setminus \left\{u_i\right\}) - f(Y_{i-1}).
\end{aligned}\label{eq:abdefn}
\end{equation}

\begin{algorithm}
\caption{RGL}\label{alg:RGL}
\begin{algorithmic}
\Require Set of base arms $\Omega$, horizon $T$  \\
$X_{0} \leftarrow \emptyset, Y_{0} \leftarrow \Omega, n \leftarrow|\Omega|$ \\
$m \leftarrow\lceil\left(T \sqrt{\frac{25}{32} \log (T)}\right)^{2 / 3}\rceil$
\For{arm index $i \in\{1, \cdots, n\}$}
    \State $\bar{a}_i \leftarrow 0$ and $\bar{b}_i \leftarrow 0$
    \For{sample $j \in\{1, \ldots, m\}$}
        \State Play $X_{i-1} \cup \left\{u_i\right\}$, $X_{i-1}$, $Y_{i-1}$, and $Y_{i-1} \setminus \left\{u_i\right\}$
        \State $\bar{a}_i \leftarrow \bar{a}_i + (f_j(X_{i-1} \cup \left\{u_i\right\}) - f_j(X_{i-1}))/m$
        \State $\bar{b}_i \leftarrow \bar{b}_i + (f_j(Y_{i-1} \setminus \left\{u_i\right\}) - f_j(Y_{i-1}))/m$
    \EndFor 
    \State $a_i^{\prime} \leftarrow \max(\bar{a}_i, 0)$ and $b_i^{\prime} \leftarrow \max(\bar{b}_i, 0)$
    \State \textbf{with probability} $(\frac{a_i^{\prime}}{a_i^{\prime} + b_i^{\prime}})$ \textbf{do} 
    \State $\quad$ $X_i \leftarrow X_{i-1} \cup \left\{u_i\right\}$ and $Y_i \leftarrow Y_{i-1}$
    \State \textbf{else}
    \State $\quad$ $Y_i \leftarrow Y_{i-1} \setminus \left\{u_i\right\}$ and $X_i \leftarrow X_{i-1}$
\EndFor

\For{remaining time} 
    \State Play $X_n$ 
\EndFor 
\end{algorithmic}
\end{algorithm}

Since $f(\cdot)$ is stochastic, $a_i$ and $b_i$ are too, even when conditioned on $X_{i-1}$, $Y_{i-1}$, and the arm $u_i$. To estimate their expectations given the sets $X_i$ and $Y_i$ and the arm $u_i$, the agent samples each of the four random set values in \eqref{eq:abdefn} $m$ times. 
Denote the $j$th sample of a played set $S$ as $f_j(S)$ and the empirical mean of playing an action $S$ as follows, 
\begin{equation}
\label{empiricalmean}
    \bar{f}(S) :=  \frac{1}{m} \sum_{j = 1}^{m} f_j(S).
\end{equation}  
Hence, the agent computes their empirical means $\bar{a}_i$ and $\bar{b}_i$ over $m$ repetitions, i.e., 
\begin{equation}
\begin{aligned}
    &\bar{a}_i =\bar{f}(X_{i-1} \cup \left\{u_i\right\}) - \bar{f}(X_{i-1}) \\
    &\bar{b}_i =  \bar{f}(Y_{i-1} \setminus \left\{u_i\right\}) - \bar{f}(Y_{i-1}).
\end{aligned}\label{eq:abdefnemp}
\end{equation}
These two estimates are important for the decision-making process. $\bar{a}_i$ measures the expected impact of adding arm $u_i$ to $X_{i-1}$, while $\bar{b}_i$ measures the expected impact of removing arm $u_i$ from $Y_{i-1}$. A decision is made greedily and probabilistically by computing a certain probability that depends on these two estimates $\bar{a}_i$ and $\bar{b}_i$, defined as follows, 
\begin{equation}
    p = \frac{a_i^{\prime}}{a_i^{\prime} + b_i^{\prime}},
\end{equation}
where $a_i^{\prime} = \max(\bar{a}_i, 0)$ and $b_i^{\prime} = \max(\bar{b}_i, 0)$, which explains the randomized greedy name of the algorithm. In the special case when $a_i^{\prime} =  b_i^{\prime} =  0$, we set $p = 1$. 

With that probability $p$, the agent adds the individual arm $i$ to the set of arms $X_i$ and keeps it in the set of arms $Y_i$, and with probability $1 - p$, the agent removes the arm $i$ from the set of arms $Y_i$ and keeps the same arms as in $X_{i-1}$. Thus, $X_i \subseteq Y_i$ for all $i=1,\dots,n$.  After checking all the $n$ individual arms, it can be easily seen that by the algorithm's construction, both sets $X_n$ and $Y_n$ contain exactly the same arms, i.e., $X_n = Y_n$. Thus, after deciding on each of the $n$ base arms, the agent exploits $X_n$ for the remaining time.

\begin{remark}
Note that the randomness of our randomized greedy algorithm was essential to achieve the $1/2$ approximation guarantees. The same algorithm with deterministic decisions, i.e., adding arm of index $i$ when $a_i \geq b_i$ would only achieve $1/3$ approximation guarantee, \citep{buchbinder2015tight}.
\end{remark}

RGL has low storage complexity and per-round time complexity. During exploitation, RGL only needs to store the indices of the selected set $X_n$ of base arms, which is at most $n$ and does not need further computation. During exploration, in phase $i$, RGL needs to update the empirical means for $\bar{a}_i$ and $\bar{b}_i$, and update the $X_i$ and $Y_i$. Thus, RGL has an $\mathcal{O}(n)$ storage complexity and  $\mathcal{O}(1)$ per round time complexity.

\section{Regret Analysis}

In this section, we will provide the paper's main result, which is a bound on the expected cumulative $\frac{1}{2}$-regret of the proposed algorithm. Before we mention the main result, we provide the Lemmas that will be useful in proving the main result. 

\begin{lemma}
For every $i \in \{1, \cdots,  n\}$, we have $\mathbb{E}[a_i + b_i] \geq 0$, where $a_i$, $b_i$ are as defined in \eqref{eq:abdefn}.  
\label{lem:abnonneg}
\end{lemma}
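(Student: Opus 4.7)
The plan is to apply the submodularity-in-expectation property (Definition~\ref{def:submod}) after conditioning on the history that determines the random sets $X_{i-1}$ and $Y_{i-1}$. The key algebraic observation is that, once we rewrite $b_i$ as a negated marginal at $u_i$, submodularity gives an inequality comparing the marginal at the smaller set $X_{i-1}$ with the marginal at the larger set $Y_{i-1}\setminus\{u_i\}$, which is exactly $\mathbb{E}[a_i]\ge -\mathbb{E}[b_i]$.

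First I would establish the structural invariant that for every $i$, $X_{i-1}\subseteq Y_{i-1}\setminus\{u_i\}$. This follows by induction on the phase index from the algorithm's construction: $X_0=\emptyset\subseteq\Omega=Y_0$, and at each phase either $u_j$ is added to $X$ (and $Y$ unchanged) or removed from $Y$ (and $X$ unchanged), so $X_j\subseteq Y_j$ is preserved. Moreover, the arms $u_1,\dots,u_{i-1}$ are the only ones touched before phase $i$, hence $u_i\notin X_{i-1}$ and $u_i\in Y_{i-1}$, which gives the required strict containment $X_{i-1}\subseteq Y_{i-1}\setminus\{u_i\}$ with $u_i\notin Y_{i-1}\setminus\{u_i\}$.

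Next, I would condition on the sigma-algebra $\mathcal{F}_{i-1}$ generated by the algorithm's history through phase $i-1$, which determines $X_{i-1}$ and $Y_{i-1}$. Because $f$ is stochastic but i.i.d.\ conditioned on the set played, the conditional expectations $\mathbb{E}[f(S)\mid\mathcal{F}_{i-1}]$ coincide with $\mathbb{E}f(S)$ evaluated at the realized (but now fixed) set $S\in\{X_{i-1},\,X_{i-1}\cup\{u_i\},\,Y_{i-1}\setminus\{u_i\},\,Y_{i-1}\}$. Applying Definition~\ref{def:submod} with $A=X_{i-1}$, $B=Y_{i-1}\setminus\{u_i\}$, and $x=u_i$ (valid since $A\subseteq B$ and $x\notin B$) yields
\begin{equation*}
\mathbb{E}[f(X_{i-1}\cup\{u_i\})-f(X_{i-1})\mid\mathcal{F}_{i-1}]\;\ge\;\mathbb{E}[f(Y_{i-1})-f(Y_{i-1}\setminus\{u_i\})\mid\mathcal{F}_{i-1}].
\end{equation*}

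Finally, the left side is $\mathbb{E}[a_i\mid\mathcal{F}_{i-1}]$ and the right side is $-\mathbb{E}[b_i\mid\mathcal{F}_{i-1}]$ by the definitions in \eqref{eq:abdefn}. Rearranging gives $\mathbb{E}[a_i+b_i\mid\mathcal{F}_{i-1}]\ge 0$, and taking an outer expectation via the tower property yields $\mathbb{E}[a_i+b_i]\ge 0$, as desired. The only subtle step is the conditioning argument needed to legitimately treat $X_{i-1}$ and $Y_{i-1}$ as fixed sets when invoking Definition~\ref{def:submod}; beyond that, the proof is a direct rewriting using the invariant $X_{i-1}\subseteq Y_{i-1}\setminus\{u_i\}$.
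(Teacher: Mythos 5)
Your proof is correct and follows essentially the same route as the paper: establish the containment $X_{i-1}\subseteq Y_{i-1}\setminus\{u_i\}$ from the algorithm's construction, apply submodularity in expectation with $A=X_{i-1}$, $B=Y_{i-1}\setminus\{u_i\}$, $x=u_i$, and rearrange to get $\mathbb{E}[a_i]\ge-\mathbb{E}[b_i]$. Your explicit conditioning on the history $\mathcal{F}_{i-1}$ before invoking Definition~\ref{def:submod} (followed by the tower property) is a slightly more careful treatment of the randomness of the sets than the paper gives, but it is the same argument.
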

\begin{proof} 
By construction, $X_{i-1} \subseteq Y_{i-1}\backslash\{u_i\}$ and $u_i\in Y_{i-1}$.   Thus, by \cref{def:submod} of submodularity, the expected marginal gain of adding $u_i$ to $Y_{i-1}\backslash\{u_i\}$ is less than or equal to the marginal gain of adding $u_i$ to $X_{i-1}$, 
\begin{align}
    &\hspace{-.5cm}\mathbb{E}\left[f\left(Y_{i-1}\right) - f\left(Y_{i-1} \backslash\left\{u_i\right\}\right)\right] \nonumber\\
    &\leq \mathbb{E}[f\left(X_{i-1} \cup\left\{u_i\right\}\right)-f\left(X_{i-1}\right)]. \label{eq:lem1:submod}
\end{align}
Plugging \eqref{eq:abdefn}   into \eqref{eq:lem1:submod} yields $\mathbb{E}[-b_i]\leq \mathbb{E}[a_i]$ which upon rearranging finishes the proof.

\end{proof}

For each arm $u_i$, the agent plays the following list of actions $\mathcal{S}_i = \left[X_{i-1}, X_{i-1} \cup\left\{u_i\right\}, Y_{i-1}, Y_{i-1} \backslash\left\{u_i\right\}\right]$ 
exactly $m$ times, then computes marginal gain estimates. To determine $m$, we  consider the equal-sized confidence radii $\mathrm{rad}:=\sqrt{2 \log (T) / m}$ for empirical estimates for all the actions $S_i$. Increasing $m$ improves the concentration of empirical estimates around their mean values, improving the quality of decisions made using those empirical estimates. However, increasing $m$ comes at the cost of more time spent playing actions whose values may be far from $\frac{1}{2}f(OPT)$ leading to high cumulative regret.

Denote the event that the empirical means of actions played when testing arm $u_i$ are concentrated around their statistical means as,
\begin{equation}
\label{concentration}
    \mathcal{E}_i:=\bigcap_{S \in \mathcal{S}_i}\{|\bar{f}(S)-\mathbb{E}[\bar{f}(S)]|<\operatorname{rad}\}
\end{equation}
Then we define the clean event $\mathcal{E}$ to be the event that the empirical means of all actions played up to and including arm $u_n$ are within rad of their corresponding statistical means:
\begin{equation}
\label{Eq:bigE}
    \mathcal{E}:=\mathcal{E}_1 \cap \cdots \cap \mathcal{E}_n .
\end{equation}  The specific sequence of actions played will depend on empirical estimates of earlier actions and their rewards. However, conditioned on the current action $S_t$ played at any time $t$, the random reward $f(S_t)$ is independent of past actions and their rewards.

Using the Hoeffding bound, we show that $\mathcal{E}$ happens with high probability. We then use the concentration of empirical means (\ref{concentration}) and properties of submodularity in expectation, Definition \ref{def:submod}, to show the next steps.

\begin{remark}
Under the clean event $\mathcal{E}_i$ (\ref{concentration}), for all $S \in \mathcal{S}_i$,
$$
|\bar{f}(S)-\mathbb{E}[\bar{f}(S)]|<\operatorname{rad}.
$$
Thus, since $X_{i-1}$ 
is in $\mathcal{S}_i$, 
$$
\mathbb{E}[\bar{f}(X_{i-1})] -rad \leq \bar{f}(X_{i-1}) \leq \mathbb{E}[\bar{f}(X_{i-1})] + \operatorname{rad}.
$$
We have similar relation for $X_{i-1} \cup\left\{u_i\right\}$, $Y_{i-1}$, $Y_{i-1}\backslash\{u_i\}$. 

Thus, 
\begin{align}
&\hspace{-1cm} \mathbb{E}[\bar{f}(X_{i-1} \cup\left\{u_i\right\})] - \nonumber \mathbb{E}[\bar{f}(X_{i-1})] -2 rad \\ \nonumber
&\leq \bar{f}(X_{i-1} \cup\left\{u_i\right\}) - \bar{f}(X_{i-1}) \\ \nonumber
&= \frac{1}{m} \sum_{j = 1}^{m} (f_j(X_{i-1} \cup\left\{u_i\right\}) - f_j(X_{i-1})) \\ \nonumber
&= \bar{a}_i. \tag{by \eqref{eq:abdefnemp}}
\end{align}
Therefore, 
\begin{equation}
    \mathbb{E}[a_i] - 2 rad \leq  \bar{a}_i \nonumber
\end{equation}
Using similar steps, it can be easily verified that, 
\begin{equation}
\label{ai_bi_bounds}
\begin{aligned}
    &\mathbb{E}[a_i] - 2 rad \leq  \bar{a}_i \leq \mathbb{E}[a_i] + 2 rad \\
    &\mathbb{E}[b_i] - 2 rad \leq  \bar{b}_i \leq \mathbb{E}[b_i] + 2 rad.
\end{aligned}
\end{equation}

\end{remark}
\color{black}


\begin{corollary}
\label{corollary4rad}
Under the clean event $\mathcal{E}$, for every $1 \leq i \leq n$, $\bar{a}_i + \bar{b}_i \geq -4 rad$.
\end{corollary}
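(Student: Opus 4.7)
The plan is to combine Lemma~\ref{lem:abnonneg} with the two-sided deviation bounds in \eqref{ai_bi_bounds} that hold on the clean event $\mathcal{E}$. Since $\mathcal{E} \subseteq \mathcal{E}_i$ for every $i$, the bounds in \eqref{ai_bi_bounds} apply under $\mathcal{E}$.

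First I would apply the lower bounds from \eqref{ai_bi_bounds}: under $\mathcal{E}$,
\begin{equation*}
\bar{a}_i \geq \mathbb{E}[a_i] - 2\,\mathrm{rad}, \qquad \bar{b}_i \geq \mathbb{E}[b_i] - 2\,\mathrm{rad}.
\end{equation*}
Adding these two inequalities yields
\begin{equation*}
\bar{a}_i + \bar{b}_i \geq \mathbb{E}[a_i] + \mathbb{E}[b_i] - 4\,\mathrm{rad} = \mathbb{E}[a_i + b_i] - 4\,\mathrm{rad},
\end{equation*}
by linearity of expectation. Then I would invoke Lemma~\ref{lem:abnonneg}, which gives $\mathbb{E}[a_i + b_i] \geq 0$, and conclude $\bar{a}_i + \bar{b}_i \geq -4\,\mathrm{rad}$, as required.

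There is no real obstacle here: the corollary is essentially a bookkeeping step that propagates the submodularity-in-expectation inequality (Lemma~\ref{lem:abnonneg}) through the Hoeffding-type concentration bounds for the four empirical means involved in defining $\bar{a}_i$ and $\bar{b}_i$. The only subtlety worth being explicit about is that the expectations $\mathbb{E}[a_i]$ and $\mathbb{E}[b_i]$ are conditional on the realized sets $X_{i-1}$, $Y_{i-1}$ (determined by prior sub-phases) and on the arm $u_i$; but Lemma~\ref{lem:abnonneg} holds for \emph{every} admissible pair $X_{i-1} \subseteq Y_{i-1} \setminus \{u_i\}$, so it holds in the conditional sense too, which is exactly what is needed to combine with the per-phase concentration bounds in \eqref{ai_bi_bounds}.
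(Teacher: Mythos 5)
Your proposal is correct and follows exactly the paper's own argument: apply the lower bounds from \eqref{ai_bi_bounds} under the clean event, add them, and invoke Lemma~\ref{lem:abnonneg} to conclude. The additional remark about the conditional nature of $\mathbb{E}[a_i]$ and $\mathbb{E}[b_i]$ is a sound clarification but does not change the route.
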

\begin{proof}
Under clean event $\mathcal{E}$, $\bar{a}_i \geq \mathbb{E}[a_i] - 2rad$ and $\bar{b}_i \geq \mathbb{E}[b_i] - 2rad$. Since $\mathbb{E}[a_i + b_i] \geq 0$ (by Lemma \ref{lem:abnonneg}), then, \\
$\bar{a}_i + \bar{b}_i \geq \mathbb{E}[a_i + b_i] - 4rad \geq - 4rad$.
\end{proof}

\begin{lemma}
\label{lemma 5rad}
Define $OPT_{i} := \left(OPT \cup X_i\right) \cap Y_i$. Under the clean event $\mathcal{E}$, for every $1 \leq i \leq n$, we have
\begin{equation}\label{ineq:lemma 5rad}
    \begin{aligned}
    &\mathbb{E}[f(OPT_{i-1}) - f(OPT_i))]\\
    &\leq \frac{1}{2} \mathbb{E}[f(X_i) - f(X_{i-1}) + f(Y_i) - f(Y_{i-1})]+ 5 rad .
    \end{aligned}
\end{equation}
\end{lemma}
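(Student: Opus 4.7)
The plan is to adapt the offline randomized-greedy argument of \citet{buchbinder2015tight} to the stochastic, noisy-estimate setting, with the extra $5\,\mathrm{rad}$ slack absorbing the gap between the algorithm's empirical probabilities and their true-expectation counterparts. I would perform a case analysis on two binary events: whether $u_i \in OPT$, and which branch the algorithm takes at step $i$ (adding $u_i$ to $X_{i-1}$ with probability $p$, or removing it from $Y_{i-1}$ with probability $1-p$). Unfolding $OPT_i = (OPT \cup X_i) \cap Y_i$ in each case gives: if $u_i \in OPT$, the add-branch leaves $OPT_i = OPT_{i-1}$ while the remove-branch yields $OPT_i = OPT_{i-1} \setminus \{u_i\}$; if $u_i \notin OPT$, the add-branch yields $OPT_i = OPT_{i-1} \cup \{u_i\}$ while the remove-branch leaves $OPT_i = OPT_{i-1}$.

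Combining these identities with $\mathbb{E}[f(X_i) - f(X_{i-1}) + f(Y_i) - f(Y_{i-1})] = p\,\mathbb{E}[a_i] + (1-p)\,\mathbb{E}[b_i]$ reduces the lemma to one scalar inequality per case. Next I would invoke Definition~\ref{def:submod} to bound the marginal change of $f$ on $OPT_{i-1}$: since $X_{i-1} \subseteq OPT_{i-1}\setminus\{u_i\}$ when $u_i \in OPT$, submodularity in expectation yields $\mathbb{E}[f(OPT_{i-1}) - f(OPT_{i-1}\setminus\{u_i\})] \leq \mathbb{E}[a_i]$; symmetrically, when $u_i \notin OPT$, the containment $OPT_{i-1} \subseteq Y_{i-1}\setminus\{u_i\}$ gives $\mathbb{E}[f(OPT_{i-1}) - f(OPT_{i-1}\cup\{u_i\})] \leq \mathbb{E}[b_i]$. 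With these reductions the lemma becomes the pair of scalar inequalities
\begin{equation*}
(1-p)\,\mathbb{E}[a_i] \leq \tfrac{1}{2}\bigl(p\,\mathbb{E}[a_i] + (1-p)\,\mathbb{E}[b_i]\bigr) + 5\,\mathrm{rad}
\end{equation*}
in Case 1 and the symmetric inequality with $p\,\mathbb{E}[b_i]$ on the left in Case 2.

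The main obstacle is that the algorithm's probability $p = a_i'/(a_i' + b_i')$ is built from the truncated empirical estimates, not from the true expectations, so the clean offline AM--GM step $2\,\mathbb{E}[a_i]\,\mathbb{E}[b_i] \leq \mathbb{E}[a_i]^2 + \mathbb{E}[b_i]^2$ (which closes both inequalities exactly for the ideal probability $p^\ast = \mathbb{E}[a_i]/(\mathbb{E}[a_i] + \mathbb{E}[b_i])$ when both expectations are nonnegative) must be executed with a perturbed $p$. I plan to split on the signs of $\mathbb{E}[a_i]$ and $\mathbb{E}[b_i]$: when both are nonnegative, replace $p$ by $p^\ast$ at an $O(\mathrm{rad})$ cost controlled by the concentration bounds in \eqref{ai_bi_bounds} and then apply AM--GM; when one expectation is negative, Lemma~\ref{lem:abnonneg} forces the other to dominate in magnitude, and the truncation defining $a_i', b_i'$ together with Corollary~\ref{corollary4rad} forces the algorithm to pick the correct branch up to an $O(\mathrm{rad})$ deviation on the opposite branch. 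Aggregating the contributions across the two conditional cases ($u_i \in OPT$ versus $u_i \notin OPT$) then yields the $5\,\mathrm{rad}$ slack and completes the proof.
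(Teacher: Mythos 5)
Your skeleton is sound and uses the same ingredients as the paper's proof — the unfolding of $OPT_i$ by branch and by membership of $u_i$ in $OPT$, the two submodularity containments $X_{i-1}\subseteq OPT_{i-1}\setminus\{u_i\}$ and $OPT_{i-1}\subseteq Y_{i-1}\setminus\{u_i\}$, the AM--GM step, and the concentration bounds \eqref{ai_bi_bounds} — but you organize the case analysis differently, and the difference matters. The paper cases on the signs of the \emph{empirical} estimates $\bar a_i,\bar b_i$, which is what the algorithm actually branches on: three of the four sign patterns make the branch deterministic (with \cref{corollary4rad} handling the doubly-negative pattern), and only in the genuinely random case ($\bar a_i\ge 0$, $\bar b_i>0$) does it apply $\frac{xy}{x+y}\le\frac12\frac{x^2+y^2}{x+y}$ \emph{directly to $\bar a_i,\bar b_i$}, converting to $\mathbb{E}[a_i],\mathbb{E}[b_i]$ only through the weights $\frac{\bar a_i}{\bar a_i+\bar b_i},\frac{\bar b_i}{\bar a_i+\bar b_i}\in[0,1]$, so each conversion costs exactly $2\,\mathrm{rad}$. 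You instead case on the signs of the \emph{true} expectations and propose to swap the algorithm's probability $p=a_i'/(a_i'+b_i')$ for the ideal $p^\ast=\mathbb{E}[a_i]/(\mathbb{E}[a_i]+\mathbb{E}[b_i])$ "at an $O(\mathrm{rad})$ cost." Be careful: $|p-p^\ast|$ is \emph{not} $O(\mathrm{rad})$ — when $\mathbb{E}[a_i]+\mathbb{E}[b_i]$ is itself of order $\mathrm{rad}$, the two probabilities can differ by $\Theta(1)$. What is actually controlled is the product of $|p-p^\ast|$ with the quantities it multiplies: one must argue separately that $|p-p^\ast|\le 2\,\mathrm{rad}/(a_i'+b_i')$ when the denominator is large, and fall back on $|p-p^\ast|\le 1$ together with $\mathbb{E}[a_i]+\mathbb{E}[b_i]=O(\mathrm{rad})$ otherwise. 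A straightforward accounting along these lines closes the argument but is liable to produce a constant larger than $5$ (a crude version gives roughly $15\,\mathrm{rad}$ in the small-denominator regime), so you would either need tighter bookkeeping or a restatement of the lemma with a larger constant — which still yields the same $\tilde{\mathcal{O}}(nT^{2/3})$ regret downstream, but does not literally prove the inequality as stated. The paper's choice of casing on the empirical signs and running AM--GM on the empirical quantities avoids this division-by-a-small-number issue entirely, which is what keeps its slack at $3\,\mathrm{rad}$ in the worst case.
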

\begin{proof}
It is sufficient to prove the inequality conditioned on any event of the form $X_{i-1} = S_{i-1}$ where $S_{i-1} \subseteq \left\{u_1, ..., u_{i-1} \right\}$, for which the probability $X_{i-1} = S_{i-1}$ is non-zero.  
The remainder of the proof assumes everything is conditioned on this event. 
We prove \cref{lemma 5rad} by considering the following four possible cases for $\bar{a}_{i}$ and $\bar{b}_{i}$:\\

$\quad$ \textbf{Case 1} ($\bar{a}_i \geq 0$ and $\bar{b}_i\leq 0$): In this case $\bar{b}_i\leq 0 \Rightarrow b_i^{\prime} = 0 \Rightarrow \frac{a_i^{\prime}}{a_i^{\prime}+ b_i^{\prime}} = 1$. Thus,  $Y_i = Y_{i-1}$ and $X_{i} = X_{i-1} \cup \left\{u_i\right\}$.  Since $X_{i} = X_{i-1} \cup \left\{u_i\right\}$, we have 
\begin{equation}
\begin{aligned}
\mathbb{E}[a_i] &= \mathbb{E}[f(X_{i-1} \cup \left\{u_i\right\}) - f(X_{i-1})] \\
&= \mathbb{E}[f(X_{i}) - f(X_{i-1})].
\end{aligned}\label{eq:eai:case1}
\end{equation}
Since $Y_i = Y_{i-1}$, the relation \eqref{ineq:lemma 5rad} that we want to show  reduces to,
\begin{equation*}
    \begin{aligned}
    &\hspace{-1cm}\mathbb{E}[f(OPT_{i-1}) - f(OPT_i)]  \\&\leq \frac{1}{2} \mathbb{E}[f(X_i) - f(X_{i-1})]+ 5 rad .
    \end{aligned}
\end{equation*}
Notice, $OPT_i = \left(OPT \cup X_i\right) \cap Y_i = OPT_{i-1} \cup \left\{u_i\right\}$. 

If $u_i \in OPT \Rightarrow OPT_i = OPT_{i-1}$. Thus, 
\begin{align}
    &\hspace{-1cm}\mathbb{E}[f(OPT_i) - f(OPT_{i-1})] \nonumber\\
    &= 0 \nonumber\\
    &\leq \frac{\bar{a}_i}{2} \tag{by case 1 condition} \\
    &\leq \frac{\mathbb{E}[a_i]}{2} + rad \tag{using (\ref{ai_bi_bounds})}\\
    &
    =\frac{1}{2}\mathbb{E}[f(X_i) - f(X_{i-1})] + rad \tag{by \eqref{eq:eai:case1}}\\
    &\leq \frac{1}{2}\mathbb{E}[f(X_i) - f(X_{i-1})] + 5rad. \nonumber
\end{align}

Now consider that $u_i \notin OPT$.  Since $OPT_{i-1} \subseteq Y_{i-1}$ holds by definition of $OPT_i$, here  $OPT_{i-1} \subseteq Y_{i-1}\backslash\{u_i\}$, and $(Y_{i-1} \backslash\left\{u_i\right\}) \cup\left\{u_i\right\}=Y_{i-1}$, so by \cref{def:submod} of submodularity in expectation, %
\begin{equation}\label{eq:prf:case1:14}
    \begin{aligned}
    &\hspace{-.5cm}\mathbb{E} [ f\left(Y_{i-1}\right)]-\mathbb{E}[ f\left(Y_{i-1} \backslash \left\{u_i\right\}\right) ]\\
    &\leq  \mathbb{E}[f(OPT_{i-1} \cup \left\{u_i\right\})] -\mathbb{E}[f(OPT_{i-1})]. 
    \end{aligned}
\end{equation}
Negating \eqref{eq:prf:case1:14}, we obtain 
\begin{align}
    &\hspace{-0.5cm}\mathbb{E}[f(OPT_{i-1})] - \mathbb{E}[f(OPT_{i-1} \cup \left\{u_i\right\})] \nonumber\\
    &\leq  \mathbb{E}[ f\left(Y_{i-1} \backslash \left\{u_i\right\}\right)]  - \mathbb{E}[ f\left(Y_{i-1}\right)] \tag{negating \eqref{eq:prf:case1:14}} \\
    &= \mathbb{E}[b_i] \tag{by def. of $b_i$ \eqref{eq:abdefn}}\\
    &\leq \bar{b}_i + 2rad \tag{using (\ref{ai_bi_bounds})}\\
    &\leq \frac{\bar{a}_i}{2} + 2rad \tag{condition for case 1}\\
    &\leq \frac{1}{2}\mathbb{E}[a_i] + 3rad \tag{using (\ref{ai_bi_bounds})}\\
    %
    &= \frac{1}{2}\mathbb{E}[f(X_i) - f(X_{i-1})] + 3rad \tag{by \eqref{eq:eai:case1}}\\
    &\leq \frac{1}{2}\mathbb{E}[f(X_i) - f(X_{i-1})] + 5rad. \nonumber
\end{align}

$\quad$\textbf{Case 2} ($\bar{a}_i < 0$ and $\bar{b}_i\geq 0$): This case is analogous to Case 1, for its proof, we refer the reader to Appendix  \ref{lemma_2_case_2}. 

$\quad$ \textbf{Case 3} ($\bar{a}_i < 0$ and $\bar{b}_i < 0$): For this case, by definition of $a_i'$ and $b_i'$, we will have $a_i^{\prime} =  b_i^{\prime} =  0$.  Thus, the selection probability of arm $u_i$ will be set as $\frac{a_i^{\prime}}{a_i^{\prime} + b_i^{\prime}} = 1$, meaning $X_{i} = X_{i-1} \cup \left\{u_i\right\}$ and $Y_i = Y_{i-1}$. Hence, we have  
\begin{equation}
\begin{aligned}
\mathbb{E}[a_i] &= \mathbb{E}[f(X_{i-1} \cup \left\{u_i\right\}) - f(X_{i-1})] \\
&= \mathbb{E}[f(X_{i}) - f(X_{i-1})].
\end{aligned}\label{eq:eai}
\end{equation}
Thus, it suffices to prove that
\begin{equation}
    \mathbb{E}(f(OPT_{i-1}) - f(OPT_i))) 
    \leq \frac{1}{2} \mathbb{E}[a_i] + 5 rad.
\end{equation}
Note that $OPT_i = \left(OPT \cup X_i\right) \cap Y_i = OPT_{i-1} \cup \left\{u_i\right\}$. Further, by Corollary \ref{corollary4rad}, under the clean event, for every $1 \leq i \leq n$, $\bar{a}_i + \bar{b}_i \geq -4 rad$. As $\bar{b}_i < 0$, then,  $\bar{a}_i \geq -4 rad$, we have 
\begin{equation} 
    \frac{\bar{a}_i}{2} + 2 rad\geq 0>\bar{b}_i. \label{eq:prf:case3:10}
\end{equation}
If $u_i \in OPT$, then $OPT_i = OPT_{i-1}$.  Thus, we have 
\begin{align}
    &\hspace{-1cm}\mathbb{E}[f(OPT_i) - f(OPT_{i-1})] \nonumber\\
    &= 0 \nonumber\\
    &\leq \frac{\bar{a}_i}{2} + 2rad \tag{by \eqref{eq:prf:case3:10}}\\
    &\leq \frac{1}{2} \mathbb{E}[a_i]+ 3rad \tag{using (\ref{ai_bi_bounds})}\\
    &\leq \frac{1}{2} \mathbb{E}[a_i] + 5rad. \nonumber
\end{align}

If $u_i \notin OPT$, 
then $OPT_{i-1} \subseteq Y_{i-1}$ and $\left(Y_{i-1} \backslash\left\{u_i\right\}) \cup\left\{u_i\right\}=Y_{i-1}\right.$  Thus, like in case 1, \eqref{eq:prf:case1:14} holds.  Negating \eqref{eq:prf:case1:14}, we obtain,
\begin{align}
    &\hspace{-0.5cm}\mathbb{E}f(OPT_{i-1}) - \mathbb{E}f(OPT_{i-1} \cup \left\{u_i\right\}) \nonumber\\
    &\leq  \mathbb{E} f\left(Y_{i-1} \backslash \left\{u_i\right\}\right)  - \mathbb{E} f\left(Y_{i-1}\right) \tag{from \eqref{eq:prf:case1:14}}\\
    &= \mathbb{E}[b_i] \tag{from def. \eqref{eq:abdefn}} \\
    &\leq \bar{b}_i + 2rad \tag{using (\ref{ai_bi_bounds})}\\
    &\leq \frac{\bar{a}_i}{2} + 4rad  \tag{by \eqref{eq:prf:case3:10}} \\
    &\leq \frac{1}{2}\mathbb{E}[\bar{a}_i] + 5rad. \nonumber
\end{align}

$\quad$ \textbf{Case 4} ($\bar{a}_i \geq 0$ and $\bar{b}_i > 0$): In this case, $a_i^{\prime} = \bar{a}_i$ and $b_i^{\prime} = b_i$. Hence, by the algorithm with probability $\frac{a_i{\prime}}{a_i{\prime}+b_i{\prime}}$, $X_i \leftarrow X_{i-1} \cup \left\{u_i\right\}$ and $Y_i \leftarrow Y_{i-1}$, and with probability $\frac{b_i^{\prime}}{a_i^{\prime} + b_i^{\prime}}$, $Y_i \leftarrow Y_{i-1} \setminus \left\{u_i\right\}$ and $X_i \leftarrow X_{i-1}$. We have,

\begin{align}
&\hspace{-.25cm}\mathbb{E}\left[f\left(X_i\right)-f\left(X_{i-1}\right)+f\left(Y_i\right)-f\left(Y_{i-1}\right)\right] \nonumber \\
&=\mathbb{E} [\mathbb{E}\left[f\left(X_i\right)-f\left(X_{i-1}\right)+f\left(Y_i\right)-f\left(Y_{i-1}\right)\right| \bar{a}_i, \bar{b}_i]] \nonumber \\
&=\mathbb{E}[\frac{\bar{a}_i}{\bar{a}_i+\bar{b}_i} \mathbb{E}\left[f\left(X_{i-1} \cup \left\{u_i\right\}\right)-f\left(X_{i-1}\right)\right] \nonumber \\
&\quad + \frac{\bar{b}_i}{\bar{a}_i+\bar{b}_i} \mathbb{E} \left[f\left(Y_{i-1} \backslash \left\{u_i\right\} \right)-f\left(Y_{i-1}\right)\right]] \nonumber \\
&= \mathbb{E}\left[\frac{\bar{a}_i\mathbb{E}[a_i]}{\bar{a}_i+\bar{b}_i}+\frac{\bar{b}_i \mathbb{E} [b_i]}{\bar{a}_i+\bar{b}_i}\right]  \tag{def. of $a_i$ and $b_i$}\\
&\geq \mathbb{E}\left[\frac{\bar{a}_i(\bar{a}_i - 2 rad)}{\bar{a}_i+\bar{b}_i}+\frac{\bar{b}_i (\bar{b}_i - 2 rad)}{\bar{a}_i+\bar{b}_i}\right] \tag{using (\ref{ai_bi_bounds})}\\
&= \mathbb{E}\left[\frac{\bar{a}_i^2}{\bar{a}_i+\bar{b}_i}+\frac{\bar{b}_i^2 }{\bar{a}_i+\bar{b}_i}- \frac{2rad(\bar{a}_i+\bar{b}_i) }{\bar{a}_i+\bar{b}_i}\right] \nonumber\\
%
&=\mathbb{E}\left[\frac{\bar{a}_i^2+\bar{b}_i^2}{\bar{a}_i+\bar{b}_i}\right] - 2 rad. \nonumber 
\end{align}

Hence, 
\begin{equation}
\label{ai2+bi2}
\begin{aligned}
& \frac{1}{2} \mathbb{E}\left[\frac{\bar{a}_i^2+\bar{b}_i^2}{\bar{a}_i+\bar{b}_i}\right] -  rad\\
& \leq \frac{1}{2} \mathbb{E}\left[f\left(X_i\right)-f\left(X_{i-1}\right)+f\left(Y_i\right)-f\left(Y_{i-1}\right)\right] .  
\end{aligned}
\end{equation}
Moreover, 
\begin{equation*}
\begin{aligned}
&\mathbb{E} [f(OPT_{i-1}) - f(OPT_{i})] \\
&= \mathbb{E}[  \frac{\bar{a}_i}{\bar{a}_i+\bar{b}_i} \mathbb{E} [f(OPT_{i-1}) - f(OPT_{i-1}\cup \left\{u_i\right\})]\\
&+  \frac{\bar{b}_i}{\bar{a}_i+\bar{b}_i}\mathbb{E} [f(OPT_{i-1}) - f(OPT_{i-1} \setminus \left\{u_i\right\})]]    
\end{aligned}
\end{equation*}

If $u_i \notin OPT \Rightarrow $ second term is zero and  $OPT_{i-1} \subseteq Y_{i-1} \setminus \left\{u_i\right\}$. Thus, by submodularity in expectation, 
\begin{equation*}
    \begin{aligned}
    &\hspace{-1cm}\mathbb{E} [f(OPT_{i-1}) - f(OPT_{i-1}\cup \left\{u_i\right\})]\\
    &\leq \mathbb{E} [f(Y_{i-1} \setminus \left\{u_i\right\}) - f(Y_i)] \\
    &= \mathbb{E}[b_i]\\
    &\leq \bar{b}_i + 2 rad,
    \end{aligned}
\end{equation*}
so if $u_i \notin OPT$ then
\begin{align}
&\hspace{-1cm}\mathbb{E} [f(OPT_{i-1}) - f(OPT_{i})] \nonumber\\
&\leq \mathbb{E}[  \frac{\bar{a}_i}{\bar{a}_i+\bar{b}_i}(\bar{b}_i + 2 rad)+  \frac{\bar{b}_i}{\bar{a}_i+\bar{b}_i}0] \nonumber\\
&=\mathbb{E}[ \frac{\bar{a}_i \bar{b}_i}{\bar{a}_i+\bar{b}_i}]  + 2 rad\mathbb{E}[ \frac{\bar{a}_i}{\bar{a}_i+\bar{b}_i}] . \label{eq:case4:ui-notin-OPT} 
\end{align}

If $u_i \in OPT \Rightarrow $ first term is zero and $X_{i-1} \subseteq\left(OPT\cup X_{i-1}\right) \cap Y_{i-1} \setminus \left\{u_i\right\}$.  Hence, by submodularity in expectation, we have
\begin{equation*}
    \begin{aligned}
    &\hspace{-1cm}\mathbb{E} [f(OPT_{i-1}) - f(OPT_{i-1}\setminus \left\{u_i\right\})] \\
    &\leq \mathbb{E} [f(X_{i-1} \setminus \left\{u_i\right\}) - f(X_i)] \\
    & = \mathbb{E}[a_i] \\
    &\leq \bar{a}_i + 2 rad .
    \end{aligned}
\end{equation*} 

Thus,  if $u_i \in OPT$ then
\begin{align}
&\hspace{-1cm}\mathbb{E} [f(OPT_{i-1}) - f(OPT_{i})] \nonumber\\
&\leq \mathbb{E}[  \frac{\bar{a}_i}{\bar{a}_i+\bar{b}_i}0+  \frac{\bar{b}_i}{\bar{a}_i+\bar{b}_i}(\bar{a}_i + 2 rad)] \nonumber\\
&=\mathbb{E}[ \frac{\bar{a}_i \bar{b}_i}{\bar{a}_i+\bar{b}_i}]  + 2 rad\mathbb{E}[ \frac{\bar{b}_i}{\bar{a}_i+\bar{b}_i}] . \label{eq:case4:ui-in-OPT} 
\end{align}

Since we are conditioning on ($\bar{a}_i \geq 0$ and $\bar{b}_i > 0$) for this case, then we have that
\begin{align*}
    \mathbb{E}[ \frac{\bar{a}_i}{\bar{a}_i+\bar{b}_i}] \geq 0 \qquad \text{and} \qquad \mathbb{E}[ \frac{\bar{b}_i}{\bar{a}_i+\bar{b}_i}] \geq 0
\end{align*}
Combining the bounds \eqref{eq:case4:ui-notin-OPT} and \eqref{eq:case4:ui-in-OPT}, we have
\begin{align}
&\hspace{-.2cm}\mathbb{E} [f(OPT_{i-1}) - f(OPT_{i})] \nonumber\\
&\leq\mathbb{E}[ \frac{\bar{a}_i \bar{b}_i}{\bar{a}_i+\bar{b}_i}]  + 2 rad\mathbb{E}[ \frac{\bar{b}_i}{\bar{a}_i+\bar{b}_i}] + 2 rad\mathbb{E}[ \frac{\bar{b}_i}{\bar{a}_i+\bar{b}_i}] \nonumber\\%
&=\mathbb{E}[ \frac{\bar{a}_i \bar{b}_i}{\bar{a}_i+\bar{b}_i}]  + 2 rad, \label{eq:case4:combined} 
\end{align}
which holds regardless of $u_i$'s membership in $OPT$.

For $x+y > 0$, by the Cauchy-Schwarz inequality, 
\begin{equation}
\label{cauchy}
    \frac{x y}{x + y} \leq \frac{1}{2} \frac{x^{2} + y^{2}}{x + y}. 
\end{equation} 

Combining the above observations, it follows that
\begin{equation*}
    \begin{aligned}
    &\mathbb{E} [f(OPT_{i-1}) - f(OPT_{i})] \\
    &\stackrel{\eqref{eq:case4:combined} }{\leq} \mathbb{E}[ \frac{\bar{a}_i \bar{b}_i}{\bar{a}_i+\bar{b}_i}]  + 2 rad \\
    &\stackrel{(\ref{cauchy})}{\leq} \frac{1}{2} \mathbb{E}[ \frac{\bar{a}_i^{2} + \bar{b}_i^{2}}{\bar{a}_i+\bar{b}_i}]  + 2 rad \\
    &\stackrel{(\ref{ai2+bi2})}{\leq} \frac{1}{2} \mathbb{E}\left[f\left(X_i\right)-f\left(X_{i-1}\right)+f\left(Y_i\right)-f\left(Y_{i-1}\right)\right] + 3 rad \\
    &\leq \frac{1}{2} \mathbb{E}\left[f\left(X_i\right)-f\left(X_{i-1}\right)+f\left(Y_i\right)-f\left(Y_{i-1}\right)\right] + 5 rad.
    \end{aligned}
\end{equation*}
\end{proof}

\begin{corollary} 
\label{corollary 5nrad}
Under the clean event $\mathcal{E}$,
\begin{equation}
    \begin{aligned}
    \mathbb{E}[f(X_n)] + \frac{5}{2} n \, rad
     \geq \frac{1}{2} \mathbb{E}[f(OPT)].
    \end{aligned}
\end{equation}
\end{corollary}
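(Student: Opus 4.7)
The plan is to telescope the per-arm inequality from \cref{lemma 5rad} over $i=1,\dots,n$ and then identify the boundary terms using the construction of $X_i$, $Y_i$, and $OPT_i$.

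First I would sum \eqref{ineq:lemma 5rad} from $i=1$ to $n$. On the left-hand side the sum $\sum_{i=1}^{n}\mathbb{E}[f(OPT_{i-1})-f(OPT_i)]$ telescopes to $\mathbb{E}[f(OPT_0)-f(OPT_n)]$. I then evaluate the endpoints from the definition $OPT_i=(OPT\cup X_i)\cap Y_i$: since $X_0=\emptyset$ and $Y_0=\Omega$, we get $OPT_0=(OPT\cup\emptyset)\cap\Omega = OPT$; and since the construction guarantees $X_n=Y_n$ and $X_n\subseteq OPT\cup X_n\subseteq$ something whose intersection with $Y_n=X_n$ equals $X_n$, we obtain $OPT_n=X_n$. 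Hence the telescoped left-hand side is exactly $\mathbb{E}[f(OPT)]-\mathbb{E}[f(X_n)]$.

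Next I would telescope the right-hand side. The sum $\sum_{i=1}^{n}\mathbb{E}[f(X_i)-f(X_{i-1})+f(Y_i)-f(Y_{i-1})]$ collapses to $\mathbb{E}[f(X_n)-f(X_0)+f(Y_n)-f(Y_0)] = \mathbb{E}[2f(X_n)-f(\emptyset)-f(\Omega)]$, again using $X_n=Y_n$ together with $X_0=\emptyset$ and $Y_0=\Omega$. The accumulated error term is $5n\,\mathrm{rad}$. Putting the two sides together gives
\begin{equation*}
\mathbb{E}[f(OPT)]-\mathbb{E}[f(X_n)] \leq \tfrac{1}{2}\mathbb{E}\bigl[2f(X_n)-f(\emptyset)-f(\Omega)\bigr] + 5n\,\mathrm{rad}.
\end{equation*}

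Finally, since $f$ is nonnegative (it is bounded in $[0,1]$ by assumption), the terms $-\tfrac{1}{2}f(\emptyset)$ and $-\tfrac{1}{2}f(\Omega)$ are nonpositive and can be dropped to only weaken the inequality. Rearranging yields $\mathbb{E}[f(OPT)]\leq 2\mathbb{E}[f(X_n)] + 5n\,\mathrm{rad}$, which after dividing by $2$ is exactly the claim. There is no real obstacle here beyond bookkeeping: the only subtlety is correctly identifying $OPT_0=OPT$ and $OPT_n=X_n$ from the definition, and noticing that the factor $\tfrac{1}{2}$ in front of the telescoped $X$/$Y$ differences turns $5n\,\mathrm{rad}$ into $\tfrac{5}{2}n\,\mathrm{rad}$ only after dividing the whole inequality by $2$—so I must be careful to keep the error term as $5n\,\mathrm{rad}$ until the very last step, where the division by $2$ produces the claimed $\tfrac{5}{2}n\,\mathrm{rad}$.
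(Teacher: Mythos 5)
Your proposal is correct and follows essentially the same route as the paper: summing the inequality of \cref{lemma 5rad} over $i=1,\dots,n$, telescoping both sides, identifying $OPT_0=OPT$ and $OPT_n=X_n=Y_n$, dropping the nonpositive terms $-f(X_0)$ and $-f(Y_0)$ by nonnegativity of $f$, and dividing by $2$. The bookkeeping of the $5n\,\mathrm{rad}$ error term into $\tfrac{5}{2}n\,\mathrm{rad}$ is handled exactly as in the paper's proof.
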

\begin{proof}
Summing up \eqref{ineq:lemma 5rad} in Lemma \ref{lemma 5rad} for $1 \leq i \leq n$ yields,  
\begin{equation*}
\begin{aligned}
&\hspace{-1cm} \sum_{i=1}^n\mathbb{E} \left[f\left(O P T_{i-1}\right)-f\left(O P T_i\right)\right] \\ 
&\leq 5 n \, rad + \frac{1}{2}  \sum_{i=1}^n \mathbb{E} \left[f\left(X_i\right)-f\left(X_{i-1}\right)\right] \\
&\qquad + \frac{1}{2}  \sum_{i=1}^n \mathbb{E}  \left[f\left(Y_i\right)-f\left(Y_{i-1}\right)\right].
\end{aligned}
\end{equation*}
Notice the sums above are telescopic. Simplifying them,
\begin{equation*}
\begin{aligned}
&\mathbb{E}\left[f\left(O P T_0\right)-f\left(O P T_n\right)\right]  \\
&\leq 5nrad + \frac{1}{2} \cdot \mathbb{E}\left[f\left(X_n\right)-f\left(X_0\right)+f\left(Y_n\right)-f\left(Y_0\right)\right]  \\
&\leq 5nrad + \frac{\mathbb{E}\left[f\left(X_n\right)+f\left(Y_n\right)\right]}{2} .
\end{aligned}
\end{equation*}
We obtain the result by noticing that $OPT_0 = OPT$ and $OPT_n = X_n = Y_n$.
\end{proof}

Having discussed the key Lemmas, the next result provides the bound on expected cumulative $\frac{1}{2}$-regret of RGL.

\begin{figure*}[t]
\begin{center}
\includegraphics[width=13.5cm]{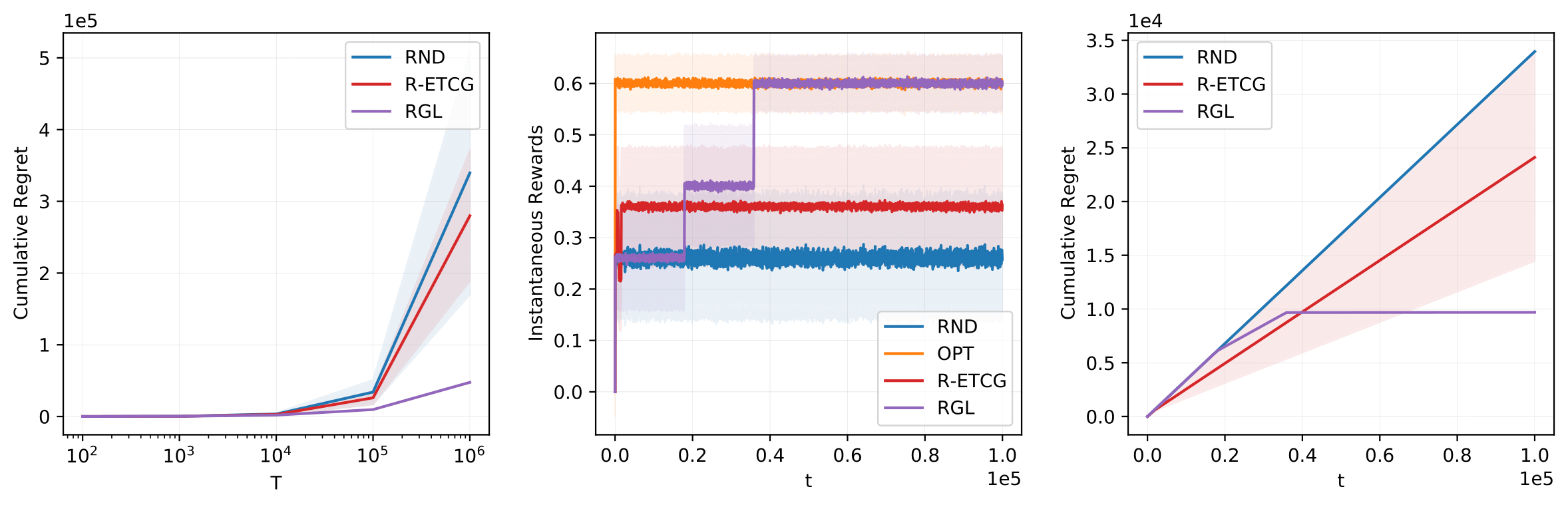}
\end{center}
\vspace{-.2in}
\caption{\small Comparison results for the non-monotone stochastic submodular reward function. From left to right, the plots show cumulative regret as a function of time step $T$, instantaneous rewards as a function of time step $t$, and cumulative regret as a function of time horizon $t$, respectively.}
\label{figure1}
\end{figure*}

\begin{figure*}[t]
\begin{center}
\includegraphics[width=13.5cm]{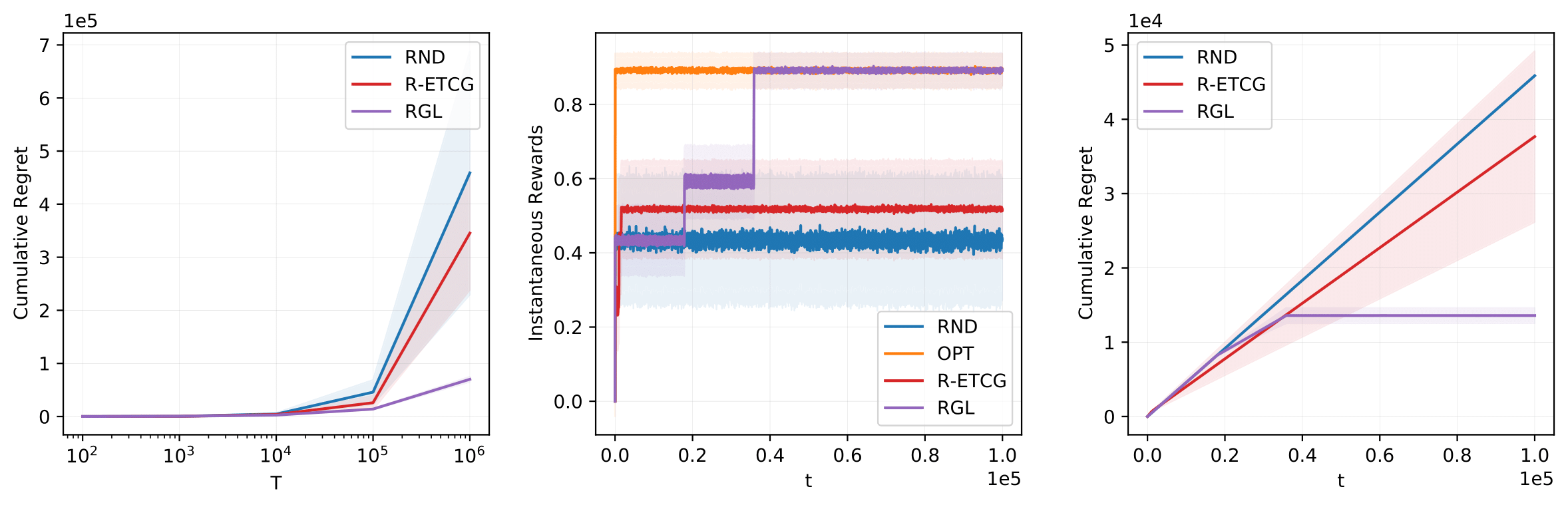}
\end{center}
\vspace{-.2in}
\caption{\small Comparison results for the non-monotone stochastic non-submodular reward function. From left to right, the plots show cumulative regret as a function of time step $T$, instantaneous rewards as a function of time step $t$, and cumulative regret as a function of time horizon $t$, respectively.}
\label{figure2}
\end{figure*}

\begin{theorem}
 For the sequential decision making problem defined in Section \ref{problem} with $T \geq 2$, the expected cumulative $\frac{1}{2}$-regret of RGL is at most $\mathcal{O}(n T^{\frac{2}{3}} \log(T)^{\frac{1}{3}})$.
\end{theorem}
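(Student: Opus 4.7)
The plan is to split the horizon into the exploration phase (the first $4nm$ rounds where the four sets from $\mathcal{S}_i$ are each sampled $m$ times for every $i\in\{1,\dots,n\}$) and the exploitation phase (the remaining $T-4nm$ rounds in which $X_n$ is played), and to condition on the clean event $\mathcal{E}$ from \eqref{Eq:bigE}. Write
\begin{equation*}
\mathbb{E}[\mathcal{R}(T)] \leq \mathbb{E}[\mathcal{R}(T)\mid \mathcal{E}] + \tfrac{T}{2}\,\mathbb{P}(\bar{\mathcal{E}}).
\end{equation*}
For the failure probability, since $f\in[0,1]$ and each $\bar f(S)$ is the mean of $m$ i.i.d.\ samples, Hoeffding's inequality gives $\mathbb{P}(|\bar f(S)-\mathbb{E}[\bar f(S)]|\geq\mathrm{rad})\leq 2\exp(-2m\,\mathrm{rad}^2)=2/T^{4}$ for our choice $\mathrm{rad}=\sqrt{2\log(T)/m}$. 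A union bound over the $4n$ actions appearing in $\mathcal{S}_1\cup\cdots\cup\mathcal{S}_n$ yields $\mathbb{P}(\bar{\mathcal{E}})\leq 8n/T^{4}$, so the contribution $\tfrac{T}{2}\mathbb{P}(\bar{\mathcal{E}})=\mathcal{O}(n/T^{3})$ is negligible.

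Conditioned on $\mathcal{E}$, I would bound the two phases separately. In exploration, each of the $4nm$ played sets gives instantaneous regret $\tfrac12 f(OPT)-f(S_t)\leq \tfrac12$, so the total exploration contribution is at most $2nm$. In exploitation, the played set is $X_n$, and by \cref{corollary 5nrad} we have $\mathbb{E}[f(X_n)\mid\mathcal{E}]\geq \tfrac12\mathbb{E}[f(OPT)]-\tfrac{5}{2}n\,\mathrm{rad}$; summing over the $T-4nm\leq T$ exploitation rounds gives a contribution of at most $\tfrac{5}{2}nT\,\mathrm{rad}$. Combining these,
\begin{equation*}
\mathbb{E}[\mathcal{R}(T)\mid \mathcal{E}] \leq 2nm + \tfrac{5}{2}nT\sqrt{\tfrac{2\log T}{m}}.
\end{equation*}

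Finally, I would substitute the algorithm's choice $m=\lceil (T\sqrt{25\log(T)/32})^{2/3}\rceil$, which is precisely the minimizer of the expression above (setting its derivative in $m$ to zero gives $m^{3/2}=T\sqrt{25\log(T)/32}$). Both terms then equalize at order $nT^{2/3}\log(T)^{1/3}$, so the conditional regret is $\mathcal{O}(nT^{2/3}\log(T)^{1/3})$, and adding the negligible $\mathcal{O}(n/T^{3})$ term from $\bar{\mathcal{E}}$ completes the proof. The main obstacle is handling the clean-event conditioning correctly: the played sets during exploration are themselves random and depend on past empirical estimates, so I need to invoke the fact used already in the paper that, conditioned on the current action $S_t$, the reward $f(S_t)$ is independent of the past, allowing the Hoeffding bound to be applied uniformly to each empirical mean before the union bound. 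Once that is in place, the rest is the algebraic optimization in $m$ above.
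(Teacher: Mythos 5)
Your proposal is correct and follows essentially the same route as the paper: the same exploration/exploitation split, the same $2nm + \tfrac{5}{2}nT\,\mathrm{rad}$ bound via Corollary~\ref{corollary 5nrad}, the same optimization over $m$, and the same $8n/T^4$ failure-probability bound (the paper's Lemma~\ref{lemma4} handles the adaptivity issue you flag by conditioning on $X_{i-1},Y_{i-1}$ and taking a product of conditional probabilities, which is the rigorous version of your union bound). No gaps.
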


\begin{proof}

We first condition the expected cumulative regret on the clean event. 
\begin{equation}
\label{regretboundclean}
\begin{aligned}
&\hspace{-.5cm}\mathbb{E}(R(T)|\mathcal{E})\\
&=\frac{1}{2} T \mathbb{E}[ f\left(OPT\right)]-\sum_{t=1}^T \mathbb{E}[ f\left(S_{t}\right)]\\
&=\sum_{t=1}^T\left(\frac{1}{2} \mathbb{E} [f(OPT)]-\mathbb{E}[ f\left(S_t\right)]\right)\\
&=\sum_{i=1}^n  \sum_{j=1}^{m} \biggr[\left(\frac{1}{2} \mathbb{E}[f(OPT)]-\mathbb{E}[f\left(X_{i-1}\right)]\right)  \\
&+ \left(\frac{1}{2} \mathbb{E}[f(OPT)]-\mathbb{E} [f\left(X_{i-1} \cup \left\{u_i\right\}\right)]\right) \\
&+ \left(\frac{1}{2} \mathbb{E}[f(OPT)]-\mathbb{E}[f\left(Y_{i-1}\right)]\right)  \\
&+ \left(\frac{1}{2} \mathbb{E}[f(OPT)]-\mathbb{E}[f\left(Y_{i-1} \setminus \left\{u_i\right\}\right)]\right)\biggr] \\
&+\sum_{t=4 n m + 1}^T\left(\frac{1}{2} \mathbb{E}[f(OPT)]-\mathbb{E} [f\left(S_t\right)]\right).
\end{aligned}    
\end{equation}
  
We split the sum into two parts, the first accounting for cumulative regret incurred during the exploration phase and the second for the exploitation phase. 
During exploration, for each arm $u_i$ 
the agent 
plays four subsets, $X_i$, $Y_i$, $X_{i-1} \cup \left\{u_i\right\}$, and $Y_{i-1} \setminus \left\{u_i\right\}$, for $m$ times each. Hence, the agent explores for 
$4 m n$ time steps. Since $f(\cdot)$ is bounded in $[0,1]$, for any subset $S_t$ played at time $t$ by the agent, 
\begin{equation}
\label{eq:boundhalf}
     \frac{1}{2} \mathbb{E}[f\left(OPT\right)]-\mathbb{E}[ f\left(S_t\right)] \leq \frac{1}{2} .\\
\end{equation}

Substituting (\ref{eq:boundhalf}) in (\ref{regretboundclean}), we have
$$
\begin{aligned}
&\hspace{-.5cm}\mathbb{E}[R(T) \mid \mathcal{E}] \\
& \leq 4 n m \frac{1}{2}+\sum_{t=T_{n+1}}^T\left(\frac{1}{2} \mathbb{E}[f(OPT)]-\mathbb{E} [f\left(S_t\right)]\right) \\
&=2 n m+\sum_{t=T_{n+1}}^T\left(\frac{1}{2} \mathbb{E}[f(OPT)]-\mathbb{E}[f\left(X_n\right)]\right) .
\end{aligned}
$$

From Corollary \ref{corollary 5nrad}, we have $\frac{1}{2} \mathbb{E}(f(OPT)) - \mathbb{E}[f(X_n)] \leq \frac{5}{2} n \, rad$. Thus, 
$$
\begin{aligned}
\mathbb{E}[R(T) \mid \mathcal{E}] & \leq 2 n m+\sum_{t=T_{n+1}}^T\left(\frac{5}{2} n \, rad \right) \\ & \leq 2 n m+\frac{5}{2} T n \, rad.
\end{aligned}
$$

Since $rad = \sqrt{2\log(T)/m}$, we have 
$$
\begin{aligned}
\mathbb{E}(R(T) \mid \mathcal{E})  & \leq 2 n m+\frac{5}{2} T n \sqrt{2\frac{\log(T)}{m}}.
\end{aligned}
$$

The above inequality is true for all $m$ strictly greater than zero. Hence, to find a tighter bound, we find $m^{*}$ that minimizes the left side. The exact minimizer is:
$$
m^{\star} = \left(T \sqrt{\frac{25}{32} \log (T)}\right)^{2 / 3}.
$$ 
Therefore, we choose $m = \lceil m^{\star}\rceil$.
$$
\begin{aligned}
\mathbb{E}(R(T) \mid \mathcal{E}) &\leq 2 n\lceil m^{\star}\rceil+ \frac{5}{2} n T \sqrt{\frac{2 \log (T)}{ \lceil m^{\star}\rceil}} \\
& \leq 2 n\lceil m^{\star}\rceil+ \frac{5}{2} n T \sqrt{\frac{2 \log (T)}{m^*}} \\
\end{aligned}
$$

For $T\geq 2$, $m^{\star} \geq \frac{1}{2}$ and thus $\lceil m^{\star}\rceil \leq 2 m^{*}$. Thus, we have
%
%
$$
\begin{aligned}
\mathbb{E}(R(T) \mid \mathcal{E}) & \leq 4 n m^{\star} + \frac{5}{2} n T \sqrt{\frac{2 \log (T)}{m^*}} \\
& \leq \mathcal{O}(n T^{\frac{2}{3}} \log(T)^{\frac{1}{3}})
\end{aligned}
$$

Under the bad event, i.e., the complement $\bar{\mathcal{E}}$ of the good event $\mathcal{E}$, given that the rewards are bounded in $[0,1]$, it can be easily seen that $\mathbb{E}(R(T) \mid \mathcal{\bar{E}})  \leq T$. Moreover, by using Lemma \ref{hoeffding} in Appendix \ref{missing proofs}, the Hoeffding inequality \citep{hoeffding1994probability}, we have $\mathbb{P}(\mathcal{\bar{E}})  \leq \frac{8n}{T^{4}}$, see Lemma \ref{lemma4} in Appendix \ref{missing proofs}. Therefore, we obtain $\mathbb{E}(R(T)) \leq \mathcal{O}(n T^{\frac{2}{3}} \log(T)^{\frac{1}{3}}).$
\end{proof}

\begin{remark}
When the time horizon $T$ is not known, we can extend our result to an anytime algorithm using the geometric doubling trick. Essentially, we pick a geometric sequence $T_i=T_0 2^i$ for $i \in\{1,2, \cdots\}$, where $T_0$ is a large enough number to let the algorithm initialize, and run RGL within time interval $T_{i+1}-T_i$ with a full restart, \citep{besson2018doubling}. From Theorem 4 in the work of \cite{besson2018doubling}, it follows that the regret bound conserves the original $T^{2 / 3} \log (T)^{1 / 3}$ dependence with only changes in constant factors.
\end{remark}

\section{Experiments}

In this section, we empirically evaluate our RGL algorithm in non-monotone, submodular and non-submodular settings. For further experiments, we refer the reader to the linear reward minus cost experiment in Appendix \ref{linear_minus_cost}, and to the revenue maximization over social networks experiment in Appendix \ref{revenue_maximizatio_social}.

We compare our method to the exact optimal solution and compute the empirical mean over different repetitions of the cumulative full regret instead of the cumulative $\frac{1}{2}$-regret, defined as follows, 
\vspace{-.1in}
$$
\bar{\mathcal{R}}(T) = \frac{1}{rep} \sum_{n=1}^{rep} \sum_{t=1}^T ( f\left(OPT\right)- f\left(S_{t}\right)).
$$
We test the algorithms on a non-monotone stochastic submodular function of the chosen set $S$, defined as 
$f(S) = \min(\max(g(S) + \varepsilon,0),1)$, 
where $\varepsilon \sim \mathcal{N}(\mu,\sigma)$. In our experiments, we choose a non-monotone submodular example of $g(S)$, where $g(\left\{\right\}) = 0.2$, $g(\left\{1\right\}) = 0$, $g(\left\{2\right\}) = 0.6$, $g(\left\{1,2\right\}) = 0.2$. Note that $\mathbb{E}[f(S)] = g(S)$, and $g(S)$ is submodular. 

In the second experiment, we choose a non-monotone non-submodular example of $g(S)$, where $g(\left\{\right\}) = 0.3$, $g(\left\{1\right\}) = 0$, $g(\left\{2\right\}) = 0.5$, $g(\left\{1,2\right\}) = 0.9$. Notice, that $\mathbb{E}[f(S)] = g(S)$, and $g(S)$ is not submodular. 

We run our method for $T \in \left\{10^{2}, 10^{3},  10^{4} ,10^{5}, 10^{6}\right\}$ time horizons. We assume $\varepsilon \sim \mathcal{N}(0,0.1)$. We average our experiments over $rep = 20$ repetitions. We average the instantaneous rewards over a window of size 50.

We use the optimal solution, which is $\left\{2\right\}$ in the first experiment and $\left\{1, 2\right\}$ in the second experiment, and run it in the online setting, where the optimal agent (OPT) only exploits the best set of arms throughout the time until $T$, see Algorithm \ref{alg:OPT}. Moreover, we compare to random bandits (RND), see Algorithm \ref{alg:RND}, which at each time step plays a random subset of $\Omega$, where each arm is sampled independently with probability $\frac{1}{2}$. The random algorithm in the offline setting has $\frac{1}{4}$-approximation guarantee \citep{feige2011maximizing}. Furthermore, we compare to one online monotone submodular maximization algorithm, ETCG, \citep{nie2022explore}. Unfortunately, the online algorithms for monotone submodular maximization require an extra input, which is the cardinality $k$. Thus, we define R-ETCG, see Algorithm \ref{alg:cap}, which initially generates a random $k \sim \mathcal{U}(0,n)$, then finds the best $k$ arms. 

From Fig. \ref{figure1} for the sub-modular function case, it can be seen that RGL reaches the optimum. From Fig. \ref{figure2}, in the non-submodular case, it can be seen that RGL still reaches the optimum. In both experiments, RGL outperforms all the above-defined benchmarks. Even though the theory is not developed for non-submodular cases, the approach can still work well even in such cases. Further, the proposed algorithm outperforms R-ETCG, indicating that the algorithms for monotone functions cannot be directly applied to the non-monotone case. 

\begin{remark}
The cumulative regret upper bound dependence of $O(T^{2/3})$ is on the horizon $T$ (not time-step $t$) (see left sub-figures in all Figures, which have cumulative regret curves increasing in $T$). For a fixed time horizon T, RGL found the optimal set of arms, which makes its cumulative regret for a fixed time horizon T a constant w.r.t. time $t$ (right sub-figures for Fig. \ref{figure1} and \ref{figure2}). Furthermore, the theoretical guarantees are for the worst-case scenario, i.e., the theory gives an upper bound on the regret, which for some instances, will be lower.
\end{remark}

\section{Conclusion}

This paper proposes RGL, the first online stochastic non-monotone submodular maximization algorithm under full-bandit feedback, i.e. when the agent only receives the reward for a chosen set of arms and has no extra information about the individual arms. The proposed algorithm provably achieves a $\frac{1}{2}$-regret upper bound of $\Tilde{\mathcal{O}}(n T^{\frac{2}{3}})$ for horizon $T$ and number of arms $n$. Moreover, the algorithm empirically outperforms the considered baselines under full-bandit feedback. 

We note that the existing results for sub-modular bandits with full-bandit feedback also achieve $\Tilde{\mathcal{O}}( T^{\frac{2}{3}})$ regret bound in the monotone function setup \citep{nie2022explore,49310}. Further, the results for non-monotone function in adversarial setting is also $\Tilde{\mathcal{O}}( T^{\frac{2}{3}})$ \citep{49310}. While a formal lower bound for this setup has not been studied, proving such a lower bound or improving the regret bounds in all these setups is an open problem.

\section{Acknowledgement}
This work was supported in part by the National Science Foundation under Grants 2149588 and 2149617.

\bibliographystyle{plainnat}

\bibliography{main}

\appendix
\onecolumn

\section{Motivating Examples for (Non-montone) Submodular Maximization}
\label{appendix_motivation}

\subsection{Data Summarization}
As huge amount of data is generated daily, selecting a good representative subset of data points remains as a challenge. Often, the utility function capturing the coverage or diversity of a subset of the entire dataset satisfies submodularity \citep{mirzasoleiman2016fast}. However, utility functions that accommodate diversity are not necessarily monotone as they penalize larger solutions \citep{tschiatschek2014learning, dasgupta-etal-2013-summarization}.

\subsection{Feature Selection}
One compelling use of non-monotone submodular maximization algorithms is modeling some learning problems such as feature selection \citep{das2008algorithms, khanna2017scalable, elenberg2018restricted, qian2019fast}. Optimizing feature selection can be modeled as a non-monotone submodular maximization due to the possible overfitting to the training data \citep{fahrbach2018non}.

\subsection{Recommender Systems}
Recommending items with redundant information leads to diminishing returns on utility. This problem of sequentially recommending
sets of items to users has been studied through the framework of contextual submodular combinatorial bandits \citep{qin2013promoting, takemori2020submodular}. The optimization is not necessarily monotone as adding further recommendations might lead to a counter effect \citep{amanatidis2020fast}. 

\subsection{Influence Maximization}
\label{inf-max}
One possible way to market a newly developed product can be done by selecting a set of highly influential people and hope they recommend it to their communities. A recent line of research has considered the problem as a multi-armed bandit problem (with extra feedback) without requiring the knowledge of the network and diffusion model \citep{lei2015online, wen2017online, vaswani2017model, li2020online, perrault2020budgeted}.  Most works consider that there is a fixed constraint on cardinality or budget.  However, a revenue maximization model to maximize income from influence minus the costs is in general a non-monotone unconstrained submodular maximization problem \citep{lu2012profit}.

\section{Additional Lemmas and Proofs}
\label{missing proofs}

\subsection{Probability of the Clean Event}

Hoeffding’s inequality \citep{hoeffding1994probability} is a powerful technique for bounding probabilities of bounded random variables. We state the inequality, then we use it to show that $\mathcal{E}$ happens with high probability.

\begin{lemma}
\label{hoeffding}
\textbf{(Hoeffding's inequality).} Let $X_1, X_2, ... , X_n$ be independent random variable bounded in $[0,1]$ and let $\bar{X}$ their empirical mean. Then we have for any $\varepsilon > 0$, 
\begin{equation}\nonumber
    \mathbb{P}(|\bar{X} - \mathbb{E}(X)| \geq \varepsilon) \leq 2 \exp \left(- 2 n \varepsilon^{2}\right).
\end{equation}
\end{lemma}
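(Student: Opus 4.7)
The plan is to prove Hoeffding's inequality via the classical Chernoff-bound approach, combined with the standard exponential moment bound (Hoeffding's lemma) for bounded random variables.

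First, I would reduce to a one-sided tail bound. Writing $S_n = \sum_{i=1}^n (X_i - \mathbb{E}X_i)$, the event $|\bar{X} - \mathbb{E}X| \geq \varepsilon$ splits as $\{S_n \geq n\varepsilon\} \cup \{S_n \leq -n\varepsilon\}$, and a union bound shows it suffices to prove $\mathbb{P}(S_n \geq n\varepsilon) \leq \exp(-2n\varepsilon^2)$; the lower tail is handled identically by replacing each $X_i$ with $-X_i$, which is again bounded in an interval of length one.

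Second, I would apply the Chernoff trick. For any $s > 0$, Markov's inequality applied to $e^{sS_n}$ gives
\begin{equation*}
\mathbb{P}(S_n \geq n\varepsilon) \;=\; \mathbb{P}(e^{sS_n} \geq e^{sn\varepsilon}) \;\leq\; e^{-sn\varepsilon}\,\mathbb{E}[e^{sS_n}],
\end{equation*}
and by independence of the $X_i$ the moment generating function factorizes as $\mathbb{E}[e^{sS_n}] = \prod_{i=1}^n \mathbb{E}[e^{s(X_i - \mathbb{E}X_i)}]$.

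Third, I would bound each factor using Hoeffding's lemma: if $Y$ is a centered random variable bounded in an interval of length $\ell$, then $\mathbb{E}[e^{sY}] \leq \exp(s^2\ell^2/8)$. Since each $X_i \in [0,1]$, the centered variable $X_i - \mathbb{E}X_i$ lies in an interval of length at most $1$, giving $\mathbb{E}[e^{s(X_i - \mathbb{E}X_i)}] \leq e^{s^2/8}$. Combining yields
\begin{equation*}
\mathbb{P}(S_n \geq n\varepsilon) \;\leq\; \exp\!\left(-sn\varepsilon + ns^2/8\right).
\end{equation*}
Optimizing the right-hand side over $s > 0$ sets $s^\star = 4\varepsilon$, which produces the bound $\exp(-2n\varepsilon^2)$. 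Doubling for the two-sided tail gives the stated inequality.

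The main obstacle is establishing Hoeffding's lemma itself. I would do this by using convexity of $y \mapsto e^{sy}$ to write, for $y \in [a,b]$, the linear upper bound $e^{sy} \leq \tfrac{b-y}{b-a}e^{sa} + \tfrac{y-a}{b-a}e^{sb}$; taking expectations and using $\mathbb{E}Y = 0$ expresses $\log \mathbb{E}[e^{sY}]$ as a function $\psi(u) := -pu + \log(1-p+pe^{u})$ with $u = s(b-a)$ and $p = -a/(b-a)$. A second-order Taylor expansion of $\psi$ around $0$, together with the elementary inequality $\psi''(u) \leq 1/4$, yields $\psi(u) \leq u^2/8$, which is exactly the desired bound $s^2(b-a)^2/8$. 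This is the only step that requires careful real-analytic work; the rest is bookkeeping.
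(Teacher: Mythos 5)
Your proof is correct, and there is nothing in the paper to compare it against: the paper states Hoeffding's inequality as a known result and cites it directly rather than proving it. Your Chernoff-bound derivation is the canonical proof from that reference, and the constants all check out — each centered $X_i - \mathbb{E}X_i$ lies in an interval of length one, the optimal choice $s^\star = 4\varepsilon$ yields the exponent $-2n\varepsilon^2$, and the union bound over the two tails supplies the factor of $2$.
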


\begin{lemma}
\label{lemma4}
The probability of clean event $\mathcal{E}$ satisfies
\begin{equation}
    \mathbb{P}(\mathcal{E}) \geq 1 - \frac{8 n}{T^4}.
\end{equation}
\end{lemma}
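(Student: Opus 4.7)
The plan is to apply Hoeffding's inequality action-by-action and then union-bound over all actions played during the exploration phase.

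First, I would fix any arm index $i \in \{1,\dots,n\}$ and any action $S \in \mathcal{S}_i = [X_{i-1}, X_{i-1}\cup\{u_i\}, Y_{i-1}, Y_{i-1}\setminus\{u_i\}]$. Even though the sets $X_{i-1}, Y_{i-1}$ themselves are random (built from earlier empirical estimates), once we condition on the history up to the start of phase $i$ the set $S$ is fixed, and the $m$ reward samples $f_1(S),\dots,f_m(S)$ used to form $\bar f(S)$ are i.i.d.\ draws from the (conditional) distribution of $f(S)$, bounded in $[0,1]$ by assumption. Applying \Cref{hoeffding} with $\varepsilon = \operatorname{rad} = \sqrt{2\log(T)/m}$ gives
\begin{equation*}
\mathbb{P}\!\left(|\bar f(S) - \mathbb{E}[\bar f(S)]| \geq \operatorname{rad} \,\big|\, \text{history up to phase } i\right) \leq 2\exp(-2m\cdot 2\log(T)/m) = \frac{2}{T^4}.
\end{equation*}
Taking an outer expectation over the history removes the conditioning and yields the same unconditional bound $2/T^4$ on the probability that $\bar f(S)$ deviates from its mean by at least $\operatorname{rad}$.

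Next, I would union-bound. The event $\bar{\mathcal{E}}$ is the union, over all $i\in\{1,\dots,n\}$ and all $S\in\mathcal{S}_i$, of the individual deviation events; since $|\mathcal{S}_i|=4$ for every $i$, there are $4n$ such events in total. Hence
\begin{equation*}
\mathbb{P}(\bar{\mathcal{E}}) \;\leq\; \sum_{i=1}^{n}\sum_{S\in\mathcal{S}_i} \mathbb{P}\!\left(|\bar f(S) - \mathbb{E}[\bar f(S)]| \geq \operatorname{rad}\right) \;\leq\; 4n \cdot \frac{2}{T^4} \;=\; \frac{8n}{T^4},
\end{equation*}
which rearranges to the claimed $\mathbb{P}(\mathcal{E})\geq 1 - 8n/T^4$.

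The only subtle point, and what I would flag as the ``main obstacle,'' is justifying the independence needed for Hoeffding. The sets $\mathcal{S}_i$ depend on past random choices of the algorithm, so a naive application of Hoeffding is not immediate; the resolution is exactly the independence observation stated right after \eqref{Eq:bigE} in the paper, that conditioned on the action played, the reward is independent of the past. One must condition on the filtration up to the start of phase $i$, apply Hoeffding to the $m$ fresh i.i.d.\ samples of each of the four fixed actions in $\mathcal{S}_i$, then take expectations and union-bound; nothing beyond that is needed.
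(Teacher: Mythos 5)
Your proof is correct, and it takes a genuinely different (and shorter) route than the paper's. You bound each of the $4n$ individual deviation events by $2/T^4$ via a conditional application of Hoeffding's inequality and then apply a union bound to $\bar{\mathcal{E}}$, which needs only the marginal deviation probability of each action and the observation that conditioning on the history fixes the action whose $m$ fresh samples are i.i.d. The paper instead works with intersections: it shows $\mathbb{P}(\mathcal{E}_i \mid X_{i-1}=X, Y_{i-1}=Y) \geq (1-2/T^4)^{4}$ by writing the conditional probability of $\mathcal{E}_i$ as a \emph{product} over the four actions in $\mathcal{S}_i$ (using that rewards are independent given the actions), chains this over phases via the law of total probability to obtain $\mathbb{P}(\mathcal{E}) \geq (1-2/T^4)^{4n}$, and only then relaxes to $1 - 8n/T^4$ by Bernoulli's inequality. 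The two arguments land on exactly the same constant; the paper's product form retains a marginally sharper intermediate bound $(1-2/T^4)^{4n}$ but requires justifying conditional independence across the four actions within a phase and across phases, whereas your union bound avoids all independence claims beyond the single-action Hoeffding step and is the more elementary of the two. You correctly identified the one subtle point — that $\mathcal{S}_i$ is itself random — and resolved it the same way the paper does, by conditioning on the sets decided before phase $i$.
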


\begin{proof}
Applying  \cref{hoeffding} to the empirical mean $\bar{f}\left(S\right)$ of $m$ rewards for action $S$ and choosing $\epsilon=\operatorname{rad}=$ $\sqrt{2 \log (T) / m}$, we have

\begin{align}
\mathbb{P}\left[\left|\bar{f}\left(S \right)-f\left(S\right)\right| \geq \operatorname{rad}\right] & \leq 2 \exp \left(-2 m \operatorname{rad}^2\right) \tag{by \cref{hoeffding}} \\
&=2 \exp (-2 m(2 \log (T) / m)) \nonumber \\
&=2 \exp (-4 \log (T)) \nonumber \\
&=\frac{2}{T^4}.
\end{align}

For each arm $u_i$, the agent plays the following list of actions $\mathcal{S}_i=\left[X_{i-1}, X_{i-1} \cup\left\{u_i\right\}, Y_{i-1}, Y_{i-1} \backslash\left\{u_i\right\}\right]$ exactly $m$ times, then computes marginal gain estimates. Thus, for any individual action $S \in \mathcal{S}_i$, we can bound the probability that its sample mean $\bar{f}\left(S\right)$ is within a specified confidence radius (complementary of the event above) as

\begin{align}
\forall S \in \mathcal{S}_i \quad \mathbb{P}\left[\left|\bar{f}\left(S\right)-f\left(S\right)\right|<\operatorname{rad}\right] &=1-\mathbb{P}\left[\left|\bar{f}\left(S\right)-f\left(S\right)\right| \geq \operatorname{rad}\right] \nonumber \\
& \geq 1-\frac{2}{T^4}. \label{lowerbound}
\end{align}

We now focus on bounding $\mathbb{P}\left(\mathcal{E}_i \mid X_{i-1} = X, Y_{i-1} = Y\right)$. By conditioning on the sets decided in the previous phase, $X_{i-1} = X, Y_{i-1} = Y$, we know all the actions that will be played in the current phase $i$, i.e. $\mathcal{S}_i$. The rewards of all the actions are bounded in $[0,1]$ and are conditionally independent (given the corresponding action).

\begin{align}
\mathbb{P}\left(\mathcal{E}_i  \mid X_{i-1} = X, Y_{i-1} = Y\right) & =\mathbb{P} \left(\bigcap_{S \in \mathcal{S}_i}\left\{\left|\bar{f}\left(S\right)- \mathbb{E} \left[f\left(S\right)\right]\right|<\operatorname{rad}\right\} \mid X_{i-1} = X, Y_{i-1} = Y\right) \nonumber \tag{by \eqref{concentration}}\\
&=\prod_{S \in \mathcal{S}_i} \mathbb{P}\left(\left\{\left|\bar{f}\left(S\right)-f\left(S\right)\right|<\operatorname{rad}\right\} \mid X_{i-1} = X, Y_{i-1} = Y\right) \nonumber \tag{rewards are independent when conditioned on actions} \\
& \geq\left(1-\frac{2}{T^4}\right)^{\left|\mathcal{S}_i\right|} \nonumber \tag{by \eqref{lowerbound}}\\
&=\left(1-\frac{2}{T^4}\right)^{4} \label{eq:pow4} 
\end{align}

With this, we can then lower bound the probability of the clean event $\mathcal{E}$,
\begin{align}
\mathbb{P}(\mathcal{E}) &=\mathbb{P}\left(\mathcal{E}_1 \cap \cdots \cap \mathcal{E}_n\right) \nonumber \tag{by \eqref{Eq:bigE}}\\
&=\prod_{i=1}^n \mathbb{P}\left(\mathcal{E}_i \mid \mathcal{E}_1, \ldots, \mathcal{E}_{i-1}\right) \nonumber\\
&= \prod_{i=1}^n \sum_{X, Y} \mathbb{P}\left(X_{i-1} = X, Y_{i-1} = Y, \mathcal{E}_i  \mid \mathcal{E}_1, \ldots, \mathcal{E}_{i-1}\right) \nonumber \tag{law of total probability}\\
&= \prod_{i=1}^n \sum_{X, Y} \mathbb{P}\left(X_{i-1} = X, Y_{i-1} = Y \mid \mathcal{E}_1, \ldots, \mathcal{E}_{i-1}\right) \times \mathbb{P}\left(\mathcal{E}_i \mid X_{i-1} = X, Y_{i-1} = Y, \mathcal{E}_1, \ldots, \mathcal{E}_{i-1}\right)\nonumber\\
&= \prod_{i=1}^n \sum_{X, Y} \mathbb{P}\left(X_{i-1} = X, Y_{i-1} = Y \mid \mathcal{E}_1, \ldots, \mathcal{E}_{i-1}\right) \times \mathbb{P}\left(\mathcal{E}_i \mid X_{i-1} = X, Y_{i-1} = Y\right) \nonumber\\
&\geq \prod_{i=1}^n \sum_{X, Y} \mathbb{P}\left(X_{i-1} = X, Y_{i-1} = Y \mid \mathcal{E}_1, \ldots, \mathcal{E}_{i-1}\right) \times \left(1-\frac{2}{T^4}\right)^{4}\nonumber \tag{by \eqref{eq:pow4}}\\
&= \prod_{i=1}^n \left(1-\frac{2}{T^4}\right)^{4} \sum_{X, Y} \mathbb{P}\left(X_{i-1} = X, Y_{i-1} = Y \mid \mathcal{E}_1, \ldots, \mathcal{E}_{i-1}\right) \nonumber\\
&= \prod_{i=1}^n \left(1-\frac{2}{T^4}\right)^{4} \nonumber\\
&= \left(1-\frac{2}{T^4}\right)^{4n} \nonumber\\
&\geq \left(1-\frac{8n}{T^4}\right).\nonumber \tag{Bernoulli's inequality}
\end{align}
\end{proof}

\subsection{Proof of Case 2 in Lemma 2.}
\label{lemma_2_case_2}

It is sufficient to prove the inequality conditioned on any event of the form $X_{i-1} = S_{i-1}$ where $S_{i-1} \subseteq \left\{u_1, ..., u_{i-1} \right\}$, for which the probability $X_{i-1} = S_{i-1}$ is non-zero.  
The remainder of the proof assumes everything is conditioned on this event. The proof of Lemma 2 was divided in 4 cases in the text, where the detailed proof of three of them is provided in the main text. The proof of Case 2 is provided here for completeness. 


\begin{proof}

$\quad$ \textbf{Case 2} ($\bar{a}_i < 0$ and $\bar{b}_i\geq 0$):  In this case $\bar{a}_i\leq 0 \Rightarrow a_i^{\prime} = 0 \Rightarrow \frac{b_i^{\prime}}{a_i^{\prime}+ b_i^{\prime}} = 1$. Thus,  $X_i = X_{i-1}$ and $Y_{i} = Y_{i-1} \setminus \left\{u_i\right\}$.  Since $Y_{i} = Y_{i-1} \setminus \left\{u_i\right\}$, we have 
\begin{equation}
\begin{aligned}
\mathbb{E}[b_i] &= \mathbb{E}[f(Y_{i-1} \setminus \left\{u_i\right\}) - f(Y_{i-1})] \\
&= \mathbb{E}[f(Y_{i}) - f(Y_{i-1})].
\end{aligned}\label{eq:eai:case2}
\end{equation}
Since $X_i = X_{i-1}$, the relation \eqref{ineq:lemma 5rad} that we want to show  reduces to,
\begin{equation*}
    \begin{aligned}
    &\hspace{-1cm}\mathbb{E}[f(OPT_{i-1}) - f(OPT_i)]  \leq \frac{1}{2} \mathbb{E}[f(Y_i) - f(Y_{i-1})]+ 5 rad .
    \end{aligned}
\end{equation*}
Note that 
\begin{equation} \label{OPT_i}
    OPT_i = \left(OPT \cup X_i\right) \cap Y_i = OPT_{i-1} \setminus \left\{u_i\right\}
\end{equation}

If $u_i \notin OPT \Rightarrow OPT_i = OPT_{i-1}$. Thus, 
\begin{align}
    \mathbb{E}[f(OPT_{i-1}) - f(OPT_{i})] &= 0 \nonumber\\
    &\leq \frac{\bar{b}_i}{2} \tag{by case 2 condition} \\
    &\leq \frac{\mathbb{E}[b_i]}{2} + rad \tag{using concentration}\\
    &
    =\frac{1}{2}\mathbb{E}[f(Y_i) - f(Y_{i-1})] + rad \tag{by \eqref{eq:eai:case2}}\\
    &\leq \frac{1}{2}\mathbb{E}[f(Y_i) - f(Y_{i-1})] + 5rad. \nonumber
\end{align}

Now consider that $u_i \in OPT$.  
By definition of $OPT_{i-1}$, $X_{i-1} \subseteq OPT_{i-1}$. Since, $u_i \notin X_{i-1}$. Then, $X_{i-1} \subseteq OPT_{i-1} \setminus \left\{u_i\right\}$. Thus by submodularity in expectation,

\begin{equation} \label{submodcase2part2}
    \mathbb{E}[ f\left(X_{i-1} \cup \left\{u_i\right\}\right) ]  - \mathbb{E}[ f\left(X_{i-1}\right)] \geq \mathbb{E}[f(OPT_{i-1} \setminus \left\{u_i\right\}) \cup \left\{u_i\right\}))] - \mathbb{E}[f(OPT_{i-1} \setminus \left\{u_i\right\})]. 
\end{equation}
This allows us to finish the bound with
\begin{align}
    \mathbb{E}[f(OPT_{i-1})] - \mathbb{E}[f(OPT_{i})] &= \mathbb{E}[f(OPT_{i-1})] - \mathbb{E}[f(OPT_{i-1} \setminus \left\{u_i\right\})] \tag{by \eqref{OPT_i}}\\
    &= \mathbb{E}[f(OPT_{i-1} \setminus \left\{u_i\right\}) \cup \left\{u_i\right\}))] - \mathbb{E}[f(OPT_{i-1} \setminus \left\{u_i\right\})] \nonumber\\
    &\leq \mathbb{E}[ f\left(X_{i-1} \cup \left\{u_i\right\}\right) ]  - \mathbb{E}[ f\left(X_{i-1}\right)] \tag{by \eqref{submodcase2part2}}\\
    &= \mathbb{E}[a_i] \tag{by def. of $a_i$}\\
    &\leq \bar{a}_i + 2rad \tag{using concentration}\\
    &\leq \frac{\bar{b}_i}{2} + 2rad \tag{condition for case 2}\\
    &\leq \frac{1}{2}\mathbb{E}[b_i] + 3rad \tag{using concentration}\\
    %
    &= \frac{1}{2}\mathbb{E}[f(Y_i) - f(Y_{i-1})] + 3rad \tag{by \eqref{eq:eai:case2}}\\
    &\leq \frac{1}{2}\mathbb{E}[f(Y_i) - f(Y_{i-1})] + 5rad. \nonumber
\end{align}

\end{proof}

\section{Benchmarks}
We now discuss benchmarks to assess the performance of our proposed algorithm.

\subsection{Optimal Bandit} 
The optimal bandit (OPT) requires the optimal set of arms as an input, and it only exploits this set throughout the time until $T$, see Algorithm\ref{alg:OPT}. The optimal set should be known in advance, or found using some offline algorithm. 

\begin{algorithm}
\caption{OPT}\label{alg:OPT}
\begin{algorithmic}
\Require horizon $T$, solution $S^{\star}$ 
\For{step time $t \in\{1, \ldots, T\}$}
    \State Play $S^{\star}$
\EndFor
\end{algorithmic}
\end{algorithm}

\subsection{Random Bandit}  
The random bandits (RND), plays at each time step a random subset of $\Omega$, where each arm is sampled independently with probability $\frac{1}{2}$, see Algorithm \ref{alg:RND}. The random algorithm in the offline setting has $\frac{1}{4}$-approximation guarantee \citep{feige2011maximizing}.

\begin{algorithm}
\caption{RND}\label{alg:RND}
\begin{algorithmic}
\Require Set of base arms $\Omega$, horizon $T$ \\
$n \leftarrow|\Omega|$ 
\For{step time $t \in\{1, \ldots, T\}$}
    \State $S^{(t)} \leftarrow \emptyset$
    \For{$i \in\{1, \ldots, n\}$}
        \State \textbf{with probability} $\frac{1}{2}$ \textbf{do} 
        \State \quad \quad $S^{(t)} \leftarrow S^{(t)} \cup\{u_i\}$ 
    \EndFor
    \State Play $S^{(t)}$
\EndFor
\end{algorithmic}
\end{algorithm}

\subsection{R-ETCG Bandit}  
Explore-then-commit greedy (ETCG) \citep{nie2022explore} is an online algorithm for monotone submodular maximization under full-bandit feedback, with proven guarantees in the monotone setting. The submodular monotone maximization, only makes sense when it is under constraint, otherwise the agent will pick all the arms as long as adding an arm is always beneficial. Therefore, the online algorithms for monotone submodular maximization require at least an extra input, such as the cardinality constraint $k$. Thus, to make applicable in our unconstrained non-monotone setting, 
we define random ETCG (R-ETCG), which initially generates a random cardinality budget $k \sim \mathcal{U}\left\{0,n\right\}$, then finds the best $k$ arms, see Algorithm  \ref{alg:cap}.

\begin{algorithm}
\caption{R-ETCG}\label{alg:cap}
\begin{algorithmic}
\Require Set of base arms $\Omega$, horizon $T$ \\
Initialize $S^{(0)} \leftarrow \emptyset, n \leftarrow|\Omega|, k \leftarrow \mathcal{U}\left\{0,n\right\}$ \\
Initialize $m \leftarrow \left \lceil \left(\frac{T \sqrt{2 \log (T)}}{n+2 n k \sqrt{2 \log (T)}}\right)^{2 / 3} \right \rceil$

\For{phase $i \in\{1, \ldots, k\}$}

    \For{arm $a \in \Omega \backslash S^{(i-1)}$}
        \State Play $S^{(i-1)} \cup\{a\}$ $m$ times
        \State Calculate the empirical mean $\bar{f}\left(S^{(i-1)} \cup\{a\}\right)$
    \EndFor
    
    \State $a_i \leftarrow \arg \max _{a \in \Omega \backslash S^{(i-1)}} \bar{f}\left(S^{(i-1)} \cup\{a\}\right)$
    
    \State $S^{(i)} \leftarrow S^{(i-1)} \cup\left\{a_i\right\}$

\EndFor
\For{remaining time} 
    \State Play $S^{(k)}$
\EndFor
\end{algorithmic}
\end{algorithm}

\section{More Experimental Evaluations}

In this section, we empirically evaluate our RGL algorithm in another non-monotone setting. We compare our method to the exact optimal solution and compute the empirical mean over different repetitions of the cumulative full regret instead of the cumulative $\frac{1}{2}$-regret, defined as follows, 
$$
\bar{\mathcal{R}}(T) = \frac{1}{rep} \sum_{n=1}^{rep} \sum_{t=1}^T ( f\left(OPT\right)- f\left(S_{t}\right)).
$$

We test for $n=8$ base arms, $T \in \left\{10^{2}, 10^{3},  10^{4} ,10^{5}, 10^{6}\right\}$ time horizon. We average our experiments over $rep = 9$ repetitions. We average the instantaneous rewards over a window of size 50.

\subsection{Linear Reward Minus Cost}
\label{linear_minus_cost}
We test the algorithms on a non-monotone stochastic function of the chosen set $X$, defined as follows,

\[
    f(X)= 
\begin{cases}
    1,  & \text{if } X =  \{5,6,7,8\} \\
    min(max(\sum_{a \in X} r(a) - \frac{|X|}{k^{\star}},0),1), & \text{  otherwise}
    
\end{cases}
\]

where $r(a)$ is the stochastic reward function of an individual arm where $\forall X, \forall a \in X, r(a) \in [0,1]$. In fact, we choose $r(a) \sim min(max(\mathcal{N}(\mu_a, \sigma),0),1)$, where $\forall X, \forall a \in X, \mu_a \in [0,1]$.

We fix an oracle constant of the submodular function $k^{\star} = 6$. We choose $\sigma = 0.02$, and $\mu$ the vector of all the $\mu_a$s, such as $\mu$ values are arranged from 0 to 0.35 with a step of 0.05. It can be easily verified that the set $\{5, 6, 7, 8\}$ is the optimal subset of arms. 

From Fig. \ref{figure3}, it can be seen that our proposed algorithm RGL is the only one that reaches the optimum among the above defined benchmarks (middle plot) and it has the least cumulative regret in terms of time horizon T (left plot). In terms of time step t (right plot), similarly to RND, RGL starts by a higher cumulative reward compared to R-ETCG, which is explainable by the relatively long early exploration phase of RGL, however, in later time steps, RGL outperforms R-ETCG, by reaching less cumulative regret, by exploiting the optimum set of arms.

\begin{figure*}[t]
\begin{center}
\includegraphics[width=14.5cm]{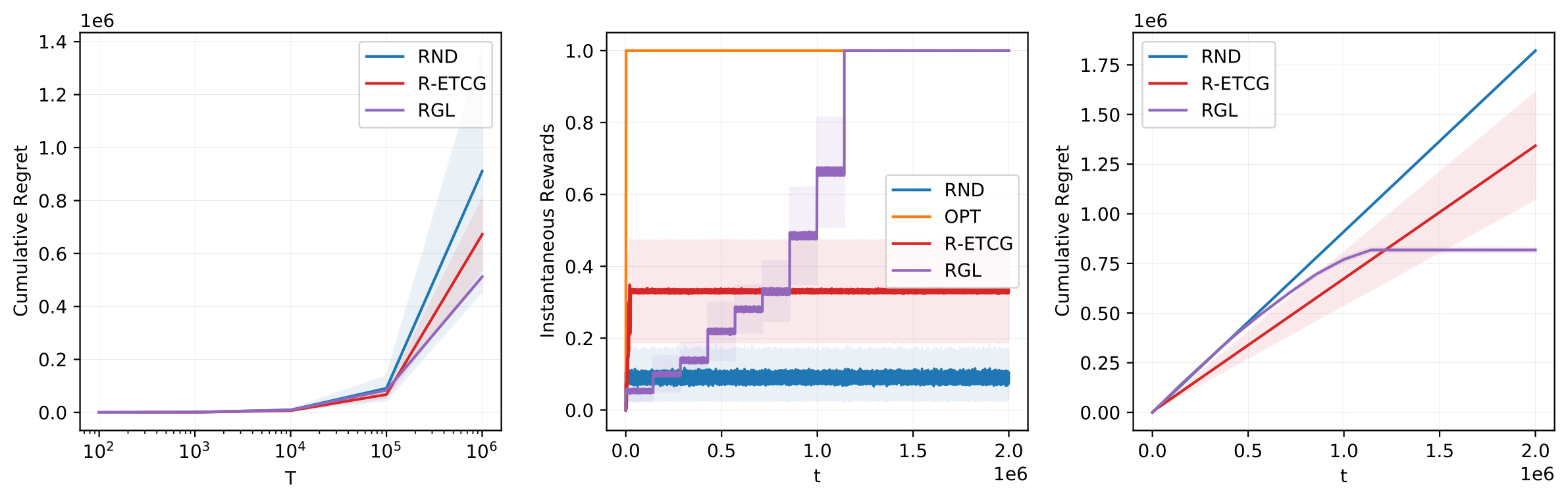}
\end{center}
\vspace{-.2in}
\caption{\small Comparison results for the non-monotone stochastic reward function. From left to right, the plots show cumulative regret as a function of time step $T$, instantaneous rewards as a function of time step $t$, and cumulative regret as a function of time horizon $t$, respectively.}
\label{figure3}
\end{figure*}

\subsection{Revenue Maximization over Social Networks}
\label{revenue_maximizatio_social}

In several real-world scenarios, non-monotone objectives are more meaningful.  For example, for revenue maximization over social networks, it is more meaningful to optimize the total revenue (influence minus costs; non-monotone) rather than the influence alone (monotone) with a budget as a constraint. Solutions to the latter will use all the budget, while the revenue-maximizing solution might use only a portion.

We test RGL on a non-monotone revenue maximization over social networks via influence maximization minus the costs. Influence maximization is indeed a submodular maximization problem which becomes non-monotone when we subtract the cost of adding nodes (Appendix \ref{inf-max}). 

We use the Karate network, which includes $34$ nodes, with an oracle function $f$, where for a subset of nodes $S$,
$$ 
f(S) =  \mathcal{N}(\sum_{c \in \mathcal{C}} \max_{a \in S \cap c} d(a), \sigma) - \alpha |S|, 
$$
where $\mathcal{C}$ refers to the set of communities, $d(a)$ is the degree of node a, $\mathcal{N}$ is the normal distribution, and $\alpha$ is a positive constant which depends on the cost. As shown in Fig. \ref{figure-revenue}, RGL outperforms all the other algorithms, as it has the lowest cumulative regret and almost reaches the optimal instantaneous rewards.

\begin{figure*}[t]
\begin{center}
\includegraphics[width=14.5cm]{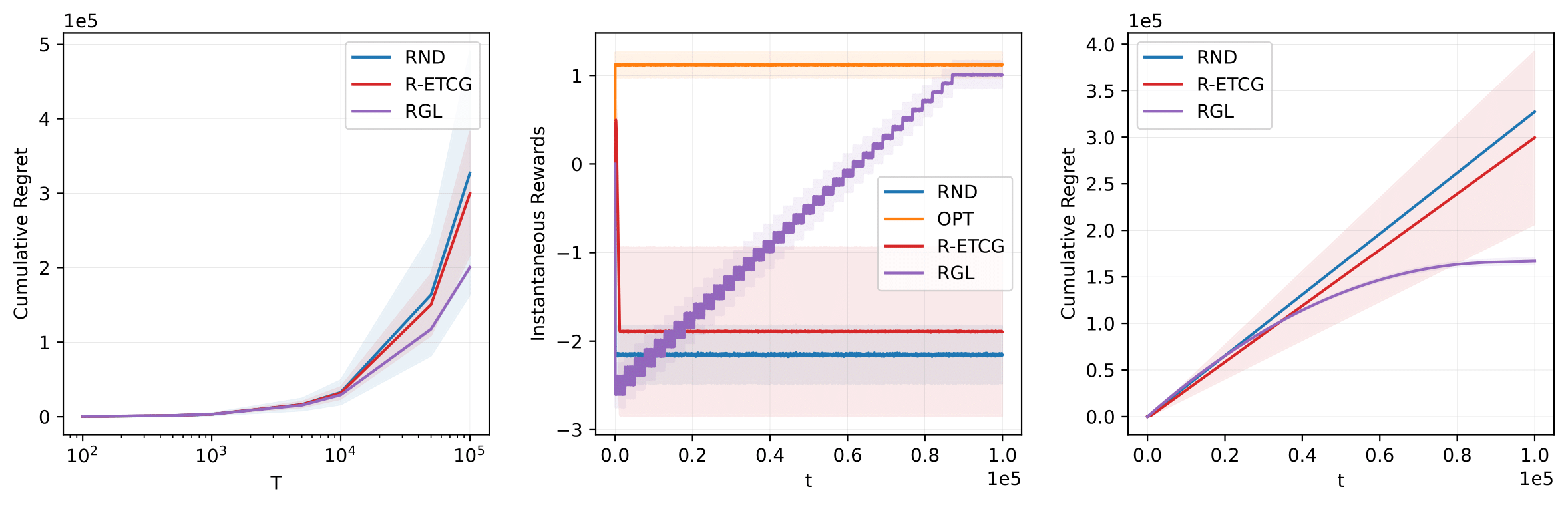}
\end{center}
\vspace{-.2in}
\caption{\small Revenue Maximization over Social Networks. From left to right, the plots show cumulative regret as a function of time step $T$, instantaneous rewards as a function of time step $t$, and cumulative regret as a function of time horizon $t$, respectively.}
\label{figure-revenue}
\end{figure*}

\end{document}